\newtheorem{definition}{Definition}
\newtheorem{remark}{Remark}
\newtheorem{proposition}{Proposition}
\let\oldReturn\Return
\renewcommand{\Return}{\State\oldReturn}
\def\BState{\State\hskip-\ALG@thistlm}
\title{\LARGE \bf
Probabilistically Safe Corridors \\ to Guide  Sampling-Based Motion Planning
}
\author{Jinwook Huh$^{1*}$, \"{O}m\"{u}r Arslan$^{2*}$, and
Daniel D. Lee$^{3}$
\thanks{$^*$ These authors contributed equally to this work.}
\thanks{$^{1}$ J. Huh is with the General Robotics, Automation, Sensing, and Perception (GRASP) Laboratory, University of Pennsylvania, Philadelphia, PA 19104. E-mail: jinwookh@seas.upenn.edu} \thanks{$^{2}$ \"{O}m\"{u}r Arslan is with the Autonomous Motion Department at Max Planck Institute for Intelligent Systems, T\"{u}bingen, Germany. E-mail: omur.arslan@tuebingen.mpg.de}
\thanks{$^{3}$ Daniel ~D. Lee is at Cornell Tech, New York, NY 10044. E-mail: ddl46@cornell.edu}
}%
\newcommand{\newtext}[1]{{\color{blue}#1}}
\newcommand{\rewrite}[1]{{\color{red}#1}}
\newcommand{\R}{\mathbb{R}} 
\newcommand{\N}{\mathbb{N}} 
\newcommand{\tr}[1]{{#1}^{\mathrm{T}}}
\newcommand{\vect}[1]{\mathrm{#1}} 
\newcommand{\diff}{\mathrm{d}} 
\newcommand{\norm}[1]{\left\|#1\right\|} 
\newcommand{\diag}{\mathrm{diag}} 
	\let\originalleft\left
	\let\originalright\right
	\renewcommand{\left}{\mathopen{}\mathclose\bgroup\originalleft}
	\renewcommand{\right}{\aftergroup\egroup\originalright}
\newcommand{\prl}[1]{\left(#1\right)} 
\newcommand{\crl}[1]{\left\{#1\right\}} 
\newcommand{\brl}[1]{\left[#1\right]} 
\newcommand{\confspace}{\mathcal{C}} 
\newcommand{\freespace}{\mathcal{F}} 
\newcommand{\collspace}{\mathcal{O}} 
\newcommand{\dimspace}{n} 
\newcommand{\gmpdf}{\mathcal{GM}} 
\newcommand{\gpdf}{\mathcal{N}} 
\newcommand{\mean}{\mu} 
\newcommand{\covmat}{\mathrm{\Sigma}} 
\newcommand{\weight}{\omega} 
\newcommand{\nummixture}{K} 
\newcommand{\point}  				{\vect{x}} 
\newcommand{\pdf}                   {p} 
\newcommand{\cdf}                   {F} 
\newcommand{\Lset}      			{\mathcal{L}} 
\newcommand{\supLset}      		{\Lset} 
\newcommand{\Lval}              	{\tau} 
\newcommand{\CR} 					{\mathcal{C}} 
\newcommand{\CL} 					{\kappa} 
\newcommand{\Lmap}               {L} 
\newcommand{\safespace}      {\mathcal{SC}} 
\newcommand{\mtrcproj}{\mathrm{\Pi}} 
\begin{document}

\maketitle
\thispagestyle{empty}
\pagestyle{empty}

\begin{abstract}

In this paper, we introduce a new probabilistically safe local steering primitive for sampling-based motion planning in complex high-dimensional configuration spaces.
Our local steering procedure is based on a new notion of a convex probabilistically safe corridor that is constructed around a configuration using tangent hyperplanes of confidence ellipsoids of Gaussian mixture models  learned from prior collision history.
Accordingly, we propose to expand a random motion planning graph towards a sample goal using its projection onto probabilistically safe corridors, which efficiently exploits the local geometry of configuration spaces for selecting proper steering direction and adapting  steering stepsize.
We observe that the proposed local steering procedure generates effective steering motion around difficult  regions of configuration spaces, such as narrow passages, while minimizing collision likelihood. We evaluate the proposed steering method with randomized motion planners in a number of planning scenarios, both in simulation and on a physical 7DoF robot arm, demonstrating the effectiveness of our safety guided local planner over the standard straight-line planner. 
\end{abstract}
\vspace{-0.15mm}

\section{Introduction}

Due to its simplicity and flexibility in handling a diverse set of configuration spaces without requiring an explicit representation, sampling-based motion planning is the mainstream approach to global motion planning for high-dimensional, highly nonlinear robotic systems, such as robot manipulators \cite{kavraki1996probabilistic,lavalle2001randomized,hsu1997path,karaman2011anytime}. 
However, the performance of such randomized motion planners strongly depends on the choice of distance measure, sampling method, and local steering; and is known to degrade significantly around complicated regions of configuration spaces, such as narrow passages~\mbox{\cite{hsu1998finding,lindemann_lavalle_ISER2005}}. 

This performance degrade is usually considered as an issue of sampling, because uniform sampling  has a Voronoi bias towards yet unexplored larger regions of configuration spaces; and accordingly many heuristic rejection sampling approaches and retraction methods are suggested to mitigate this issue, but retraction methods often require a distance-to-collision measure \cite{saha2005finding,zhang2008efficient}. 
On the contrary, assuming that this performance decay is due to the lack of effective local steering, in \cite{arslan_pacelli_kod_IROS2017}  a geometric local steering policy that can ``feel'' the local geometry of configuration spaces  is proposed for efficient planning around narrow passages; however, its computation also requires a distance-to-collision measure.
Since the exact computation of distance-to-collision  in complex high-dimensional configuration spaces is hard \cite{denny2013adapting}, Gaussian mixture learning \cite{huh2016learning} and locally weighted regression \cite{burns_brock_ICRA2005} are applied to construct approximate probabilistic models of collision and collision-free subspaces of  configuration spaces for fast collision checking and biased sampling over free space and difficult regions of configuration spaces.
In particular, simultaneous modeling of collision and free subspaces is shown to be critical for local planning around narrow passages \cite{denny_amato_WAFR2013}.
In this paper, by combining the strengths of  \cite{arslan_pacelli_kod_IROS2017} and \cite{huh2016learning}, we introduce a new notion of \emph{probabilistically safe corridors} for probabilistically safe guided local steering for sampling-based planning without requiring an explicit computation of distance-to-collision.

\begin{figure}[t]
\centering
\vspace{1mm}
\begin{tabular}{@{}c@{\hspace{2mm}}c@{}}
\includegraphics[width=0.235\textwidth]{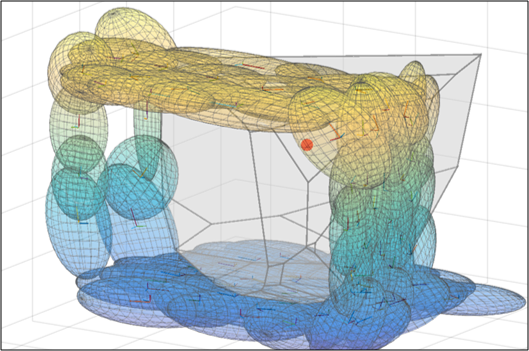}
&
\includegraphics[width=0.235\textwidth]{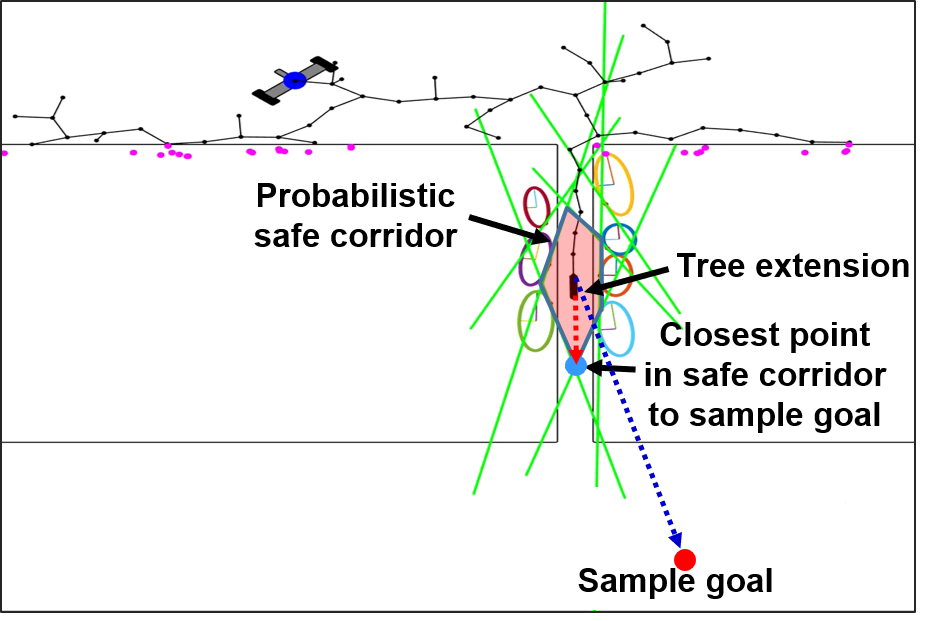}
\end{tabular}
\vspace{-2mm}
\caption{(left) Probabilistically safe corridor in 3D space constructed around a sample configuration (red) by using tangent hyperplanes (gray)  of confidence ellipsoids of a learned Gaussian mixture model of configuration space obstacles. (right) Local steering via probabilistically safe corridor in 2D space: An RRT is extended along the safe direction (red dotted line) towards the projection of a sample goal (red) onto the associated probabilistically safe corridor (red polygon),  instead of the standard straight-line extension (blue dotted line) towards the sample goal.}
\label{grd_fig:polytope}
\vspace{-4mm}
\end{figure}

More precisely, we construct a probabilistically safe corridor around a configuration using tangent hyperplanes of  confidence regions of learned Gaussian mixtures that separate the input configuration from the confidence ellipsoids, as illustrated in Fig. \ref{grd_fig:polytope}\,(left).
Accordingly, we propose a probabilistically safe local steering primitive towards a sample goal configuration via its projection onto the probabilistically safe corridor, as shown in Fig. \ref{grd_fig:polytope}\,(right). 
Since the proposed steering method exploits the local geometry of configuration spaces via learned Gaussian mixture models (GMMs) and generates steering motion within probabilistically safe corridors, in our numerical simulation and experiments, we observe that it yields a better exploration of configuration spaces while minimizing collision~likelihood.

In summary, the main contributions of the paper include: 
\begin{enumerate}[i)]
\item a novel geometric approximation of configuration space obstacles by confidence ellipsoids of learned GMMs,
\item  a new construction of probabilistically safe corridors using tangent hyperplanes of confidence ellipsoids, 
\item an effective probabilistically safe local steering primitive that can minimize collision likelihood.
\end{enumerate}
Using numerical simulations and real experiments, we demonstrate that the proposed probabilistically safe local steering approach can dramatically improve the performance of randomized motion planners around narrow passages and significantly outperforms the straight-line local planner in high dimensional configuration spaces by decreasing the number of collisions.

\section{Related Work}

Sampling-based planning approaches suffer from heavy computational time in complex environments since they typically require a considerable number of sample configurations and their collision checks. Therefore, several biased sampling methods \cite{boor1999gaussian,hsu1998finding} and rejection sampling methods \cite{shkolnik2011sample,shkolnik2009reachability,yershova2005dynamic} are proposed to reduce the number of sample nodes and so to improve computational efficiency. However, these approaches have many heuristic parameters and require explicit configuration space information, such as visibility or collision boundaries, which usually limits their application to  low dimensional settings. 
Another alternative approach to increase the computation efficiency is to reduce the number of collision checks, using either lazy collision checking \cite{hwang2015lazy,sanchez2002delaying,bohlin2000path}  or fast probabilistic collision checks  \cite{huh2016learning,huh2017adaptive,panfast,aoude2013probabilistically}. 
Exact safety certificates are also utilized for minimizing the computational cost of collision checks \cite{bialkowski_otte_karaman_frazzoli_IJRR2016}.
However, these methods are still not able to address the narrow passage problem of sampling-based motion planning.

In order to resolve the narrow passage problem, Zhang and Manocha present a steering approach that retracts sample configurations to become more likely to be connected to nearby nodes \cite{zhang2008efficient}. However, it requires a significant number of iterations to find a new collision-free configuration that is around the collision boundary, and also requires an appropriate distance-to-collision measure. In practice, since the exact distance-to-collision measurement in high dimensional configuration spaces is very hard, its applicability is also limited to low dimensional motion planning problems.
Moreover, workspace topology is utilized in biasing configuration space exploration  for planning around difficult regions \cite{plaku2010motion, denny2016dynamic}, but the topology of high-dimensional configuration space (e.g., robot manipulators) is significantly different and more complex than the corresponding workspace topology.    

Local safe corridors \cite{wein_berg_halperin_IJRR2008, geraerts_ICRA2010, chen2016online, liu2017planning} recently find significant applications in collision-free motion planning by using sequential composition of simple local planners \cite{conner_rizzi_choset_IROS2003}. 
Such safe corridors  are usually constructed based on a convex decomposition of the environment, which requires an explicit representation of the environment.
In \cite{arslan_pacelli_kod_IROS2017},  a sensory steering algorithm is proposed for sampling-based motion planning that increases the connectivity of randomized motion planning graphs, especially around narrow passages, by exploiting local geometry of configuration spaces via convex local safe corridors.
This construction is further extended to integrate local system dynamics and local workspace geometry in kinodynamic motion planning \cite{pacelli_arslan_koditschek_ICRA2018}.
However,  the original construction of sensory steering requires an explicit representation of configuration space obstacles or an explicit distance-to-collision metric, and so its direct application to high dimensional motion planning is limited. In this paper, we enhance this sensory steering algorithm to adapt it to high dimensional settings, such as robotic manipulation, by defining probabilistically safe corridors that are constructed using a learned approximate probabilistic model of a configuration space.

\section{Safety-Guided RRT \\ via Probabilistically Safe Corridors}


In this section, we first present a brief overview of how learning of Gaussian mixtures\footnote{\label{ft.GMM}Although other probabilistic (mixture) models can be used for approximating $\freespace$ and $\collspace$, we find it convenient to use Gaussian mixtures since their confidence regions can be accurately and efficiently approximated using confidence regions of individual Gaussians which have an ellipsoidal form.} can be used for approximate probabilistic modeling of configuration spaces, and then introduce a new notion of a probabilistically safe corridor around a configuration that identifies a safe neighborhood of  the configuration with minimal collision risk.
Accordingly, we propose a practical extension\footnote{Safety guided steering via probabilistically safe corridors can be integrated with any (sampling-based) motion planning algorithm (e.g., probabilistic roadmaps--PRMs) as a local steering primitive, especially for uncertainty-aware belief-space planning, which we plan to explore in a future paper.} of the standard RRT planner, called Safety-Guided RRT (SG-RRT), where tree extension is guided to ensure safety constraints defined by  probabilistically safe corridors.


\vspace{-1mm}
\subsection{Gaussian Mixture Modeling of Configuration Spaces}
\label{sec.ProbabilisticModel}

Let $\confspace$ denote the configuration space of a robotic system embedded in an $\dimspace$-dimensional Euclidean space $\R^n$, and denote by $\freespace \subset \confspace$ and $\collspace\subset \confspace$, respectively, the free subspace and the collision subspace (i.e., obstacles) of the configuration space $\confspace$, which, by definition, satisfy $\freespace = \confspace \setminus \collspace$.
In general, an explicit representation of the free space $\freespace$ or the collision space $\collspace$ in terms of simple geometric  shapes is known to be very hard to obtain, especially for high-dimensional complex systems such as robotic manipulators.
Hence, as in \cite{huh2016learning}, we consider approximate probabilistic representations of the free space $\freespace$ and the collision space $\collspace$ in terms of Gaussian mixtures models\textsuperscript{\ref{ft.GMM}}, respectively, denoted by $\gmpdf\prl{\boldsymbol{\mean}_{\freespace}, \boldsymbol{\covmat}_{\freespace}, \boldsymbol{\weight}_{\freespace}}$ and $\gmpdf\prl{\boldsymbol{\mean}_{\collspace}, \boldsymbol{\covmat}_{\collspace}, \boldsymbol{\weight}_{\collspace}}$, that are constructed using collision and collision-free sample configurations as described below.
Here, a Gaussian mixture distribution $\gmpdf\prl{\boldsymbol{\mean}, \boldsymbol{\covmat}, \boldsymbol{\weight}}$, consisting of $\nummixture \in \N$ mixture components, is parametrized by  a list of mixture means $\boldsymbol{\mean} := (\mean_{1}, \mean_{2}, \ldots, \mean_{\nummixture}) \in \prl{\R^{\dimspace}}^{\nummixture}$,  a list of positive-definite covariance matrices  $\boldsymbol{\covmat} := (\covmat_{1}, \covmat_{2}, \ldots, \covmat_{\nummixture}) \in \prl{\R^{\dimspace\times \dimspace}}^{\nummixture}$ and  a list of normalized mixture weights $\boldsymbol{\weight}:=(\weight_{1}, \weight_{2}, \ldots, \weight_{\nummixture}) \in \prl{\R_{\geq 0}}^{\nummixture}$, satisfying $\sum_{k=1}^{\nummixture}\weight_k = 1$,  and its value at a point $\vect{x} \in \R^{\dimspace}$ is given by%
 \vspace{-2mm}
 \begin{align}
 \gmpdf\prl{\vect{x}; \boldsymbol{\mean}, \boldsymbol{\covmat}, \boldsymbol{\weight}} := \sum\limits_{k=1}^{\nummixture} \weight_i \gpdf\prl{\vect{x}; \mean_k, \covmat_k},
 \end{align}
 where  $\gpdf\prl{\vect{x}; \mean, \covmat}$ is the multivariate Gaussian distribution with mean $\mean$ and covariance matrix $\covmat$,
 \begin{align}
 \!\gpdf\prl{\vect{x}; \mean, \covmat} \!:=\! \frac{1}{\det\prl{2\pi\covmat}^{\frac{1}{2}}}\exp\prl{\!\!-\frac{1}{2}\tr{\prl{\vect{x}\!-\!\mean}}\covmat^{-1}\prl{\vect{x}\!-\!\mean}\!\!}.\!\!\!
 \end{align}
Note that the numbers of mixtures, $\nummixture_{\freespace}$ and $\nummixture_{\collspace}$, used for modeling the free space $\freespace$ and the collision space $\collspace$ can be different, especially the Meanshift  clustering algorithm used in this paper automatically determines the number of mixture components using sample configurations based on a geometric bandwidth parameter as described below. 
It is also important to highlight that one can simply use $\gmpdf\prl{\vect{x}, \boldsymbol{\mean}_{\freespace}, \boldsymbol{\covmat}_{\freespace}, \boldsymbol{\weight}_{\freespace}}$  and $\gmpdf\prl{\vect{x}, \boldsymbol{\mean}_{\collspace}, \boldsymbol{\covmat}_{\collspace}, \boldsymbol{\weight}_{\collspace}}$ to estimate how likely a configuration is   in collision, which is leveraged in \cite{huh2016learning} for fast collision checking and biased sampling.  
In addition to such demonstrated  potential improvements, we shall show below that confidence regions of these Gaussian mixture models can be utilized for  understanding the local geometry of the configuration space $\confspace$   and for increasing the quality of the local steering heuristic (which is the Euclidean distance in our case) to  better approximate the true geodesic  (cost-to-go)  metric  of the configuration space $\confspace$.

\subsubsection{Learning Gaussian Mixtures}

\begin{figure}[t]
\vspace{1.5mm}
\centering
\begin{tabular}{@{\hspace{0.5mm}}c@{\hspace{1mm}}c@{}}
\includegraphics[width=0.265\textwidth]{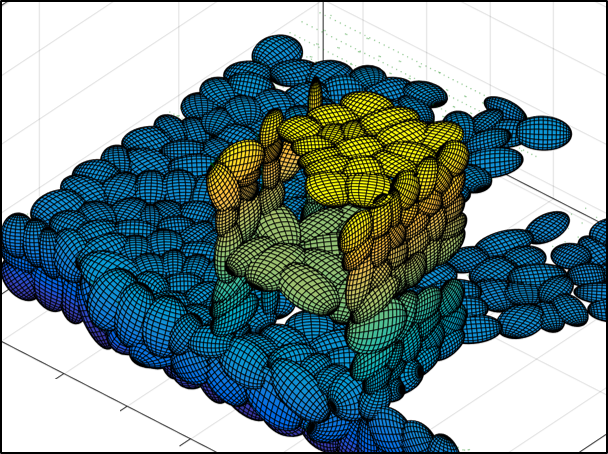}
\includegraphics[width=0.200\textwidth]{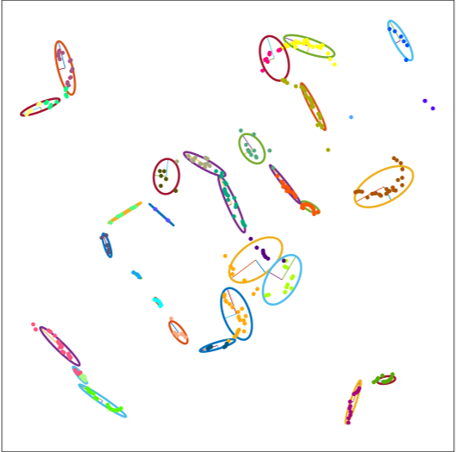}
\end{tabular}
\vspace{-2mm}
\caption{Examples of learned Gaussian mixture models. Ellipsoids show the confidence regions associated with the confidence level of $\kappa = 0.9$. (left) Gaussian mixtures in the 3D workspace shown in Fig. \ref{grd_7d_link_result}, (right) Gaussian mixtures in the configuration space of a 2DoF planar manipulator.}
\label{grd_fig_modeling_result1}
\vspace{-3.5mm}
\end{figure}


One can use a number of Expectation-Maximization (EM) variant methods for  Gaussian mixture learning  for modeling the free space $\freespace$ and  the collision space $\collspace$ using  collision and collision-free sample configurations in an offline or online manner, as in our previous work \cite{huh2016learning}. In this paper, we apply the Meanshift clustering method \cite{cheng_PAMI1995} with a Gaussian kernel for learning Gaussian mixtures using collision information of sample configurations obtained during previous attempts of a randomized motion planner, which is a convenient way of learning from past experiences and exploiting the collision history. In addition, this approach resolves the problem that general mixture modeling approaches have no explicit way of determining the required number of mixtures, because the Meanshift clustering requires a kernel bandwidth $B$ instead of the number of clusters $K$.
The kernel bandwidth $B$ can be set based on the desired level of spatial resolution.
With the bandwidth $B$, we initialize the clusters and then perform a single step EM update to estimate cluster statistics. We set the membership weight value  as $z^i_k = 1$ if the $i$th point in $N$ samples is included in the $k$th cluster, and $z^i_k = 0$ otherwise. Then, the cluster statistics (mass $m_k$, mean $\mean_k$, covariance matrix $\covmat_k$, and weight $\weight_k$) for the $k$th cluster are given by
\begin{align*}
m_k =& \sum_{i = 1}^N z^i_k, ~~\mean_k = \frac{1}{m_k}\sum_{i=1}^N z^i_k \vect{x}_i, ~~ \weight_k =\frac{m_k}{\sum_{j=1}^K m_j}~,\\
\covmat_k =& \frac{1}{m_k}\sum_{i = 1}^N z^i_k (\vect{x}_i \!-\! \mean_k)\tr{(\vect{x}_i \!-\! \mean_k)}\!, \,
\textrm{for} ~ k \in \{ 1,\cdots,K\}.
\end{align*}


In Fig. \ref{grd_fig_modeling_result1},  we present some examples of constructed probabilistic models of different configuration space and workspace by the suggested approach. 
Fig. \ref{grd_fig_modeling_result1}\,(left) shows a probabilistic model to define the collision space from 3D point clouds obtained by a depth sensor. 
Fig. \ref{grd_fig_modeling_result1}\,(right) shows the generated probabilistic models using collision information of samples in the configuration space of a 2DoF planar manipulator. Such  probabilistic representations of configuration spaces can be utilized for collision likelihood estimation, as a computationally efficient alternative to the exact distance-to-collision measurement \cite{huh2016learning}.  


\subsubsection{Confidence Regions of Gaussian Mixtures} 
\label{sec.GM_ConfidenceRegion}

While a Gaussian mixture model $\gmpdf\prl{\boldsymbol{\mean}_{\freespace}, \boldsymbol{\covmat}_{\freespace}, \boldsymbol{\weight}_{\freespace} }$ of the free space $\freespace$ can be used to bias sampling over the free space, in addition to its use in fast collision checking \cite{huh2016learning}, we propose  a new novel  use of confidence regions of a Gaussian mixture model $\gmpdf\prl{\boldsymbol{\mean}_{\collspace}, \boldsymbol{\covmat}_{\collspace}, \boldsymbol{\weight}_{\collspace}}$ of the collision space $\collspace$  for understanding the local geometry of the configuration space $\confspace$, which is the main contribution of the present paper.

\begin{definition}
The \emph{confidence region} $\CR_{\pdf}\prl{\CL}$ of a continuous probability distribution $\pdf:\R^\dimspace \rightarrow \R_{\geq 0}$ associated with a \emph{confidence level}  $\CL \in \brl{0,1}$ is defined to be the super level set $\supLset_{\pdf}\prl{\Lval} := \crl{\point \in \R^{\dimspace} \Big| \, \pdf\prl{\point} \geq \Lval}$ of $p$,  for some $\Lval \in \R_{\geq 0}$, over which the cumulative mass distribution of $\pdf$ is $\CL$, i.e, 
\begin{align}
\CR_{\pdf}\prl{\CL} \:= \supLset_{\pdf}\prl{\Lval}  \quad \text{such that} \quad  \int_{\supLset_{\pdf}\prl{\Lval}} \pdf\prl{\point} \diff \point = \CL \,.
\end{align}
Hence, it is  convenient to have  $\Lmap_{\pdf}\prl{\CL}$ denote the level function of  $\pdf$ that returns the corresponding level of $\pdf$ defining the confidence region $\CR_{\pdf}\prl{\CL}$, i.e.,%
\begin{align}
\CR_{\pdf}\prl{\CL} = \supLset_{\pdf}\prl{\Lmap_{\pdf}\prl{\CL}}.
\end{align}
\end{definition}

\smallskip

Although confidence regions of an arbitrary probability distribution cannot be expressed explicitly in terms of simple geometric shapes and so are needed to be computed numerically \cite{hyndman_AS1996},
confidence regions of Gaussian distributions have an analytical ellipsoidal form. 

\begin{figure}[t]
\vspace{2mm}
\centering
\begin{tabular}{@{}c@{}c@{}}
\includegraphics[width=0.24\textwidth]{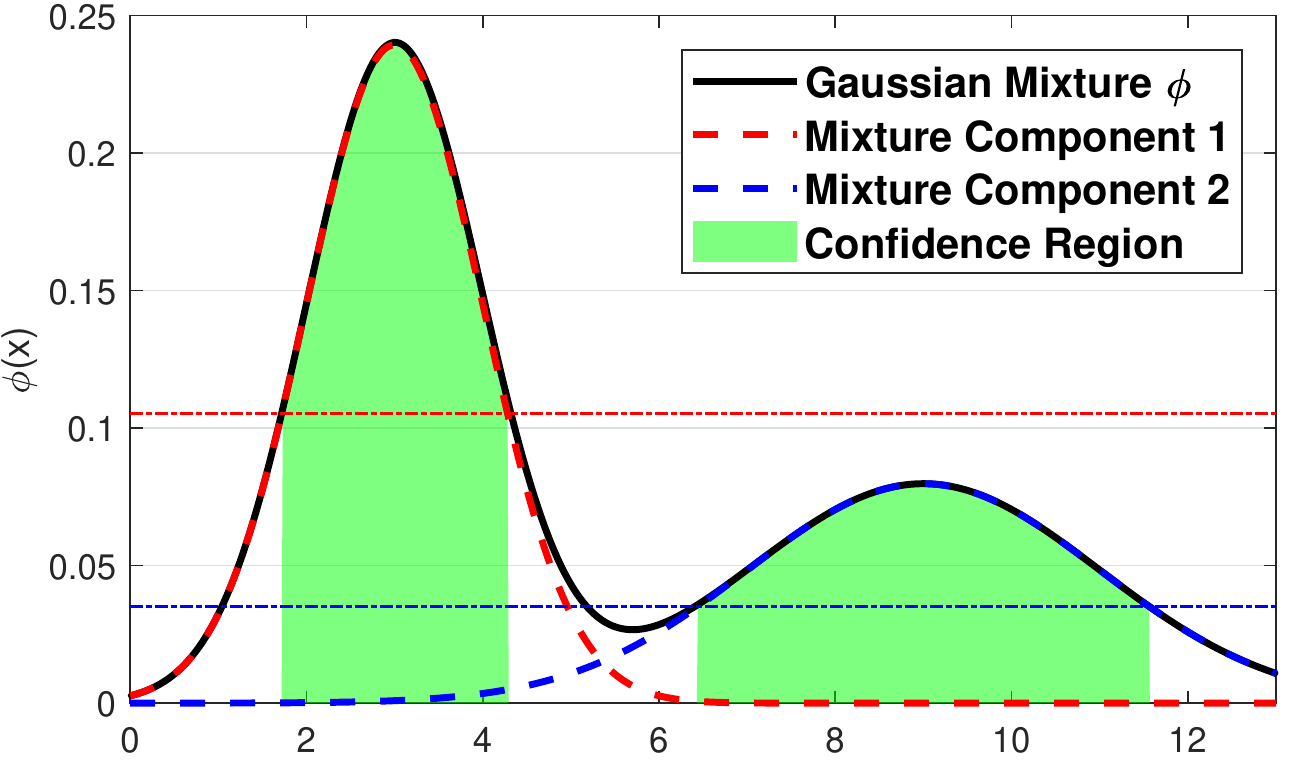} &  \includegraphics[width=0.24\textwidth]{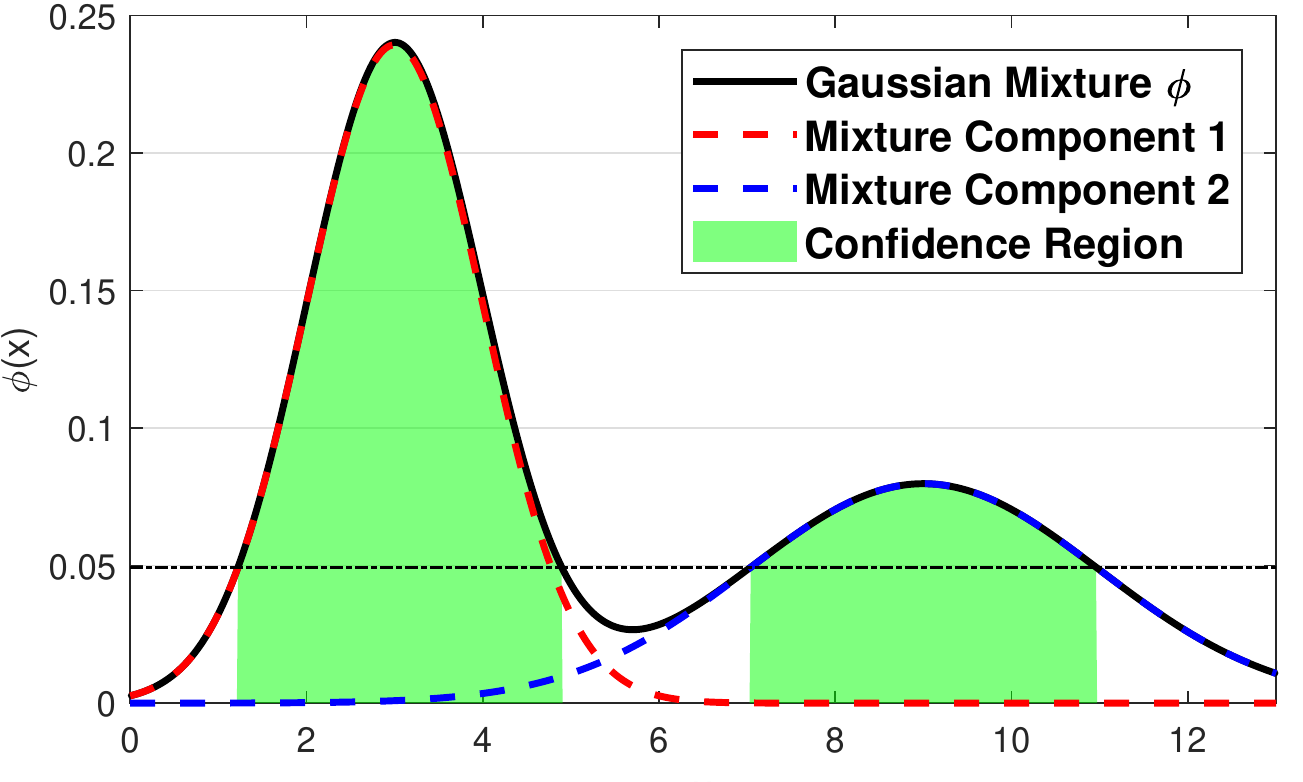} \\[-2.5mm]
\hspace{2.5mm}\scalebox{0.7}{(a)} & \hspace{2.5mm}\scalebox{0.7}{(b)} \\[0mm]
\hspace{2.5mm}\includegraphics[width=0.22\textwidth]{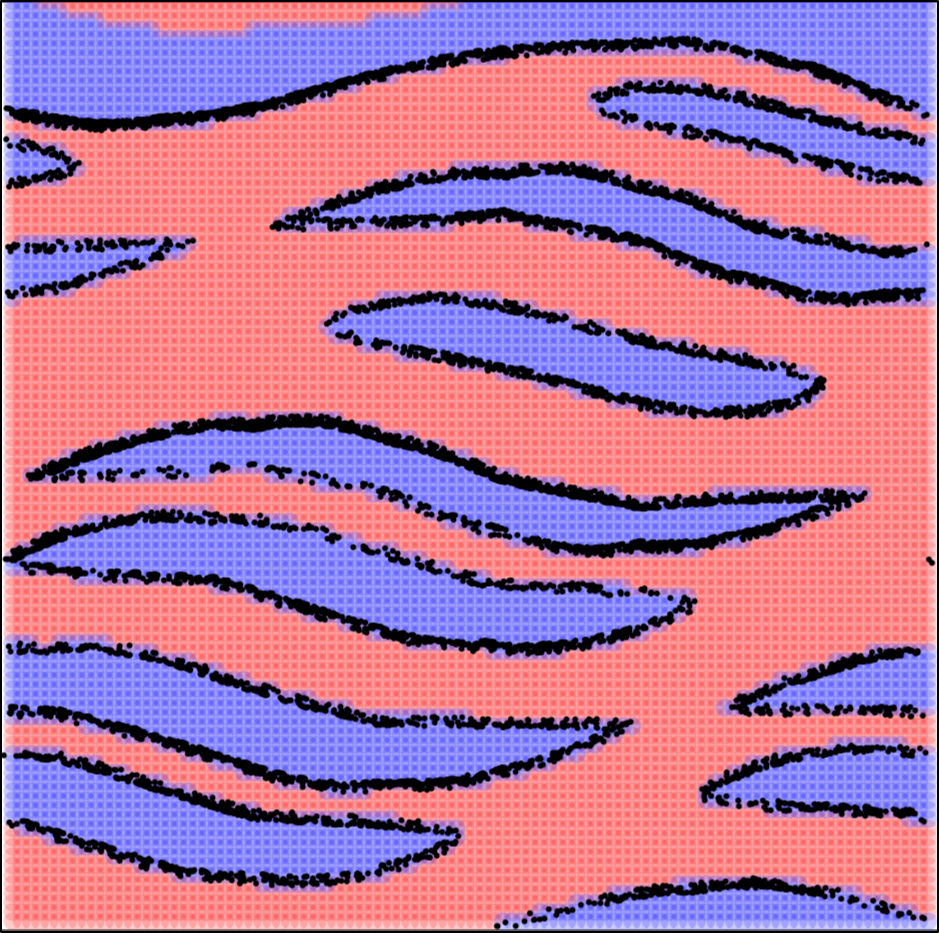} & \hspace{2.5mm}\includegraphics[width=0.22\textwidth]{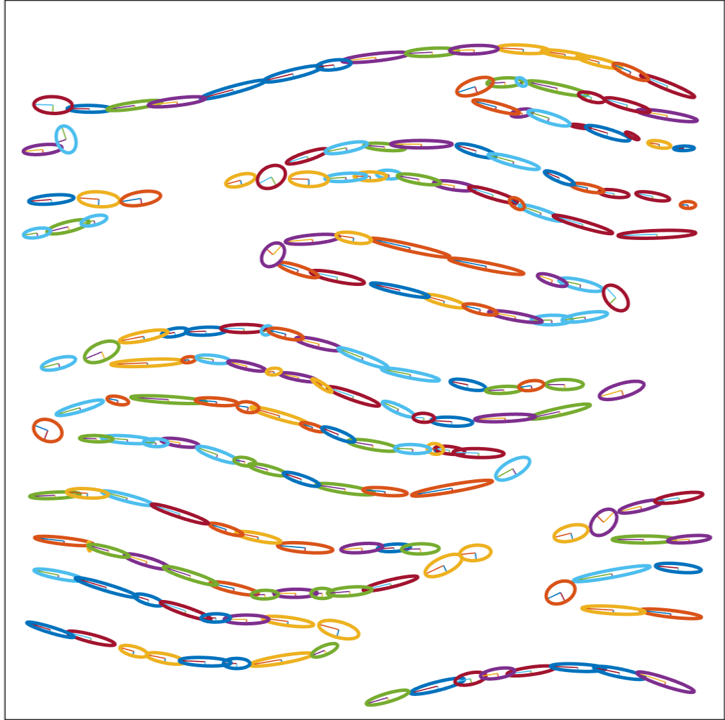} \\[-1mm]
\hspace{2.5mm}\scalebox{0.7}{(c)} & \hspace{2.5mm}\scalebox{0.7}{(d)}
\end{tabular}
\vspace{-3mm}
\caption{GMM confidence regions. (a)  Super level sets  of individual Gaussians at confidence level $\CL_k = \CL$. (b) Super level sets of Gaussians at the confidence levels corresponding to a shared probability level. (c) An example configuration space (collisions are in blue and free space is in red)  and (d) the associated confidence ellipsoids of learned GMM distributions from collision samples (black in (c)). }
\label{gmm_confidence_region}
\vspace{-3mm}
\end{figure}


\begin{remark}
For any confidence level $\CL \in \brl{0,1}$, the ellipsoidal  confidence region $\CR_{\gpdf\prl{\mean. \covmat}}\prl{\CL}$ and the level function $\Lmap_{\gpdf\prl{\mean,\covmat}}\prl{\CL}$ of the Gaussian distribution $\gpdf\prl{\point; \mean, \covmat}$ are, respectively, given by  
\begin{align}
\label{eq.gCR}
\!\!\CR_{\gpdf\prl{\mean, \covmat}}\prl{\CL} & =\! \crl{\point \!\in \R^{\dimspace} \bigg| \tr{\prl{\point \!-\! \mean}} \covmat^{-1} \prl{\point \!-\! \mean} \leq \cdf_{\chi^2_{\dimspace}}^{-1}\prl{\CL}\!},\!\!\! 
\\
\!\!\Lmap_{\gpdf\prl{\mean,\covmat}}\prl{\CL} & = \frac{1}{\det\prl{2\pi\covmat}^{\frac{1}{2}}} \exp\prl{- \frac{1}{2}\cdf_{\chi^2_{\dimspace}}^{-1}\prl{\CL}\!},
\end{align}
where $\cdf_{\chi^2_{\dimspace}}: \R_{\geq0} \rightarrow \brl{0,1}$ denotes the cumulative probability distribution of $\chi^2_{\dimspace}$ distribution with $\dimspace$ degrees of freedom.
Hence, for any $\Lval \in \R_{\geq 0}$, the confidence level $\CL$ of the super level set $\supLset_{\gpdf\prl{\mean,\covmat}}\prl{\Lval}$ of the Gaussian distribution $\gpdf\prl{\mean, \covmat}$ is explicitly given by
\begin{align}
\CL = \Lmap_{\gpdf\prl{\mean,\covmat}}^{-1}\prl{\Lval} = \cdf_{\chi^2_{\dimspace}}\prl{-\log\prl{\Lval^2 \det \prl{2\pi \covmat}\!}\!}.
\end{align} 
\end{remark}

\medskip 

Accordingly, since it lacks an exact closed-form expression, we suggest approximating the confidence region of a Gaussian mixture distribution $\gmpdf\prl{\boldsymbol{\mean}, \boldsymbol{\covmat}, \boldsymbol{\weight}}$ associated with a confidence level $\CL \in \brl{0,1}$ as a union of ellipsoidal confidence regions of individual Gaussians,  associated with confidence levels $\boldsymbol{\CL}:=\prl{\CL_1, \CL_2, \ldots, \CL_\nummixture}$ that satisfy $\sum_{k=1}^{\nummixture}\weight_k\CL_k = \CL$ , as
\begin{align}
& \overline{\CR}_{\gmpdf\prl{\boldsymbol{\mean}, \boldsymbol{\covmat}, \boldsymbol{\weight}}}\prl{\boldsymbol{\CL}} := \bigcup\nolimits_{k=1}^{\nummixture} \CR_{\gpdf\prl{\mean_k, \covmat_k}}\prl{\CL_k}, \\
& =  \bigcup_{k=1}^{\nummixture}  \crl{\point \in \R^{\dimspace} \Big| \tr{\prl{\point - \mean_k}} \covmat_k^{-1} \prl{\point - \mean_k} \leq \cdf_{\chi^2_{\dimspace}}^{-1}\prl{\CL_k}\!},\!\!\! .
\end{align} 
Observe that, by construction, we have
\begin{equation}
\int\nolimits_{\overline{\CR}_{\gmpdf\prl{\boldsymbol{\mean}, \boldsymbol{\covmat}, \boldsymbol{\weight}}}\prl{\boldsymbol{\CL}}} \gmpdf\prl{\point; \boldsymbol{\mean}, \boldsymbol{\covmat}, \boldsymbol{\weight}} \diff \point \, \geq \, \CL\,.
\end{equation}
A standard choice of the confidence levels of individual Gaussians is $\CL_k = \CL$ for all $k$ as shown in Fig. \ref{gmm_confidence_region}\,(a); however, this usually yields a poor approximation of the actual confidence region of the mixture model because less accurate Gaussians with high variances become more influential in determining the confidence region.
A more accurate analytical choice for  the individual confidence levels is $\CL_k = \Lmap_{\gpdf\prl{\mean_k,\covmat_k}}^{-1}\prl{\frac{\Lval}{\weight_k}} $ based on a shared probability level  $\Lval = \sum_{k=1}^{\nummixture} \weight_k^2 \Lmap_{\gpdf\prl{\mean_k,\covmat_k}}\prl{\CL}$ \cite{arslan_huh_lee_2018}.
Alternatively, in this paper, we use  an iterative search algorithm to find a more accurate shared probability level  $\Lval$  as described in \cite{arslan_huh_lee_2018}  and set  $\CL_k = \Lmap_{\gpdf\prl{\mean_k,\covmat_k}}^{-1}\prl{\frac{\Lval}{\weight_k}} $ for all $k$, as shown in Fig. \ref{gmm_confidence_region}\,(b). With this approach, we obtain confidence regions of  Gaussian mixture models that approximately represents  configuration space obstacles, as illustrated in  Fig. \ref{gmm_confidence_region}\,(c)-(d).

\subsection{Probabilistically Safe Corridors}

Suppose $\gmpdf\prl{\boldsymbol{\mean}_{\collspace}, \boldsymbol{\covmat}_{\collspace},\boldsymbol{\weight}_{\collspace}}$ be a Gaussian mixture model constructed as described above for modeling the collision subspace $\collspace$ of a configuration space in $\R^{\dimspace}$  and let $\overline{\CR}_{\gmpdf\prl{\boldsymbol{\mean}_{\collspace}, \boldsymbol{\covmat}_{\collspace}, \boldsymbol{\weight}_{\collspace}}}\prl{\boldsymbol{\CL}_{\collspace}}$ be the corresponding approximate confidence region associated with a desired confidence level $\CL = \sum_{k=1}^{\nummixture_{\collspace}} \weight_{\collspace_k}\CL_{\collspace_k}$.
Accordingly, we define the \emph{probabilistically safe corridor} around a configuration $\vect{p} \in \R^{\dimspace}$ to be%
{\small
\begin{align}
\safespace_{\collspace}\prl{\vect{p}} &\!\!:=\!\! \crl{\!\vect{x}\bigg|
\scalebox{0.89}{$\tfrac{\!\tr{\prl{\vect{p}-\mean_{\collspace_k}}\!} \covmat_{\collspace_k}^{-1} \!\prl{\vect{x} - \mean_{\collspace_k}}\!}{\!\big\|\covmat_{\collspace_k}^{-\frac{1}{2}}\!\prl{\vect{p}-\mean_{\collspace_k}}\big\|^2\!}$}\!\geq\!  \min \!\left(\! \scalebox{0.89}{$\tfrac{\sqrt{\cdf_{\chi^2_{\dimspace}}^{-1}\prl{\CL_{\collspace_k}}}}{\!\big\|\covmat_{\collspace_k}^{-\frac{1}{2}}\!\prl{\vect{p}-\mean_{\collspace_k}}\big\|\!}$}, \!1 \!\!-\!\! \epsilon\!\!\right)\!,  \forall k  \!}\!,
\\
&\hspace{-11mm}\!=\! \crl{\!\vect{x} \!\in\! \R^{\dimspace} \! \bigg| \tfrac{\tr{\prl{\mean_{\collspace_k}\!\!- \vect{p}}\!} \covmat_{\collspace_k}^{-1}\! \prl{\vect{x} - \vect{p}}}{\Big\|\covmat_{\collspace_k}^{-\frac{1}{2}}\!\prl{\mean_{\collspace_k}\! \!- \vect{p}}\Big\|^2} \!\leq\!  \max \!\left( \!\!1\!-\! \tfrac{\sqrt{\cdf_{\chi^2_{\dimspace}}^{-1}\prl{\CL_{\collspace_k}}}}{\Big\|\covmat_{\collspace_k}^{-\frac{1}{2}}\!\prl{\mean_{\collspace_k}\! \!-\! \vect{p}}\Big\|}, \epsilon\!\!\right)\!,  \forall k \!}\!, \label{eq.ProbabilisticallySafeCorridor}
\end{align}
}%
%
%
which is constructed using tangent hyperplanes of confidence ellipsoids of Gaussians and is a closed convex polytope, as depicted Fig. \ref{grd_fig_concept_guidance}.
Here, $\epsilon \in \R$ is a scalar safety tolerance parameter, and $\norm{.}$ denotes the standard Euclidean norm, and for any positive-definite covariance matrix $\covmat \in \R^{\dimspace\times \dimspace}$, a positive-definite choice of $\covmat^{-\frac{1}{2}}$ is  $\covmat^{-\frac{1}{2}} = \vect{V} \prl{\diag\prl{\frac{1}{\sqrt{\sigma_1}}, \frac{1}{\sqrt{\sigma_2}}, \ldots, \frac{1}{\sqrt{\sigma_\dimspace}}}} \tr{\vect{V}}$ where $\covmat = \vect{V}\,\diag\prl{\sigma_1, \sigma_2, \ldots, \sigma_{\dimspace}} \tr{\vect{V}}$ is the singular-value decomposition of $\covmat$.
It is also useful to observe from (\ref{eq.gCR}) that $\cdf_{\chi^2_{\dimspace}}^{-1}\prl{\CL_{\collspace_k}} = \Big\|\covmat_{\collspace_k}^{-\frac{1}{2}}\!\prl{\mean_{\collspace_k}\! \!-\! \vect{p}}\Big\|^2$ for any confidence region boundary point  $\vect{p} \in \partial\CR_{\gpdf\prl{\mean_{\collspace_k}, \covmat_{\collspace_k}}}\prl{\CL_{\collspace_k}}$.
Hence, the safety constraints encoded by $\safespace_{\collspace}$ are relaxed with increasing $\epsilon$.

\begin{proposition}\label{prop.NonemptySafeSpace}
For $\epsilon \geq 0$, the probabilistically safe corridor $\safespace_{\collspace}\prl{\vect{p}}$ of a configuration $\vect{p} \in \R^{\dimspace}$ is a  nonempty convex neighborhood of $\vect{p}$; and for $\epsilon > 0$, $\safespace_{\collspace}\prl{\vect{p}}$ strictly contains $\vect{p}$ in its interior $\mathring{\safespace}_{\collspace}\prl{\vect{p}}$, i.e., for any  $\vect{p} \in \R^{\dimspace}$ 
\begin{align}
\vect{p} \in \safespace_{\collspace}\prl{\vect{p}} \quad \forall \epsilon \geq 0, \text{ and } \vect{p} \in \mathring{\safespace}_{\collspace}\prl{\vect{p}} \quad \forall \epsilon > 0.
\end{align}
\end{proposition}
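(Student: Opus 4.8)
The plan is to prove all three assertions — convexity, nonemptiness, and the two membership relations — by first recognizing that, for a fixed base configuration $\vect{p}$, the set $\safespace_{\collspace}\prl{\vect{p}}$ in (\ref{eq.ProbabilisticallySafeCorridor}) is merely a finite intersection of closed half-spaces, and then verifying directly that $\vect{p}$ itself satisfies each of the defining linear inequalities (strictly, once $\epsilon>0$). Throughout I would assume the generic nondegeneracy condition $\vect{p}\neq\mean_{\collspace_k}$ for every $k$, so that the Mahalanobis norms $\big\|\covmat_{\collspace_k}^{-\frac{1}{2}}\prl{\mean_{\collspace_k}-\vect{p}}\big\|$ in the denominators of (\ref{eq.ProbabilisticallySafeCorridor}) are strictly positive and the corridor is well defined.

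First I would make the affine structure explicit. Using the second representation in (\ref{eq.ProbabilisticallySafeCorridor}), for each $k$ set $\vect{c}_k := \covmat_{\collspace_k}^{-1}\prl{\mean_{\collspace_k}-\vect{p}}$, $d_k := \big\|\covmat_{\collspace_k}^{-\frac{1}{2}}\prl{\mean_{\collspace_k}-\vect{p}}\big\|>0$, $r_k := \sqrt{\cdf_{\chi^2_{\dimspace}}^{-1}\prl{\CL_{\collspace_k}}}$, and $\beta_k := \max\prl{1-r_k/d_k, \epsilon}$. The crucial observation is that $\beta_k$ depends only on $\vect{p}$ and on the fixed mixture parameters, not on $\vect{x}$; hence the $k$-th defining condition is exactly the affine inequality $\tr{\vect{c}_k}\prl{\vect{x}-\vect{p}}\leq\beta_k d_k^2$, which describes a closed half-space. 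Therefore $\safespace_{\collspace}\prl{\vect{p}}$ is the intersection of the $\nummixture_{\collspace}$ half-spaces indexed by $k$, so it is a closed convex polytope — in particular convex.

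Next I would verify membership and nonemptiness by substituting $\vect{x}=\vect{p}$: every left-hand side becomes $\tr{\vect{c}_k}\prl{\vect{p}-\vect{p}}=0$, while every right-hand side is $\beta_k d_k^2\geq\epsilon\,d_k^2\geq 0$ for $\epsilon\geq 0$. So all constraints hold at $\vect{p}$, giving $\vect{p}\in\safespace_{\collspace}\prl{\vect{p}}$ and $\safespace_{\collspace}\prl{\vect{p}}\neq\emptyset$. For the interior claim, when $\epsilon>0$ the same computation gives the strict inequalities $0<\epsilon\,d_k^2\leq\beta_k d_k^2$ at $\vect{x}=\vect{p}$ for every $k$; since there are only finitely many indices $k$ and each map $\vect{x}\mapsto\tr{\vect{c}_k}\prl{\vect{x}-\vect{p}}$ is continuous, a finiteness-and-continuity argument yields $\delta>0$ with $\ball\prl{\vect{p},\delta}\subseteq\safespace_{\collspace}\prl{\vect{p}}$, hence $\vect{p}\in\mathring{\safespace}_{\collspace}\prl{\vect{p}}$. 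I would also record that the same estimate makes $\vect{p}$ interior even for $\epsilon=0$ whenever $\vect{p}$ lies strictly outside every confidence ellipsoid $\CR_{\gpdf\prl{\mean_{\collspace_k},\covmat_{\collspace_k}}}\prl{\CL_{\collspace_k}}$, i.e.\ $d_k>r_k$ for all $k$, since then $\beta_k=1-r_k/d_k>0$; this is the precise sense in which $\safespace_{\collspace}\prl{\vect{p}}$ is a neighborhood of a probabilistically safe $\vect{p}$.

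I do not expect any deep obstacle: the argument is a direct verification once the affine structure is recognized. The only two points that need a little care are (i) noticing that the $\min/\max$ clipping on the right-hand sides of (\ref{eq.ProbabilisticallySafeCorridor}) does not spoil linearity in $\vect{x}$, because those clipped quantities are constants with respect to $\vect{x}$; and (ii) handling the degenerate case $\vect{p}=\mean_{\collspace_k}$, where a denominator vanishes and the construction is undefined, which I would exclude by hypothesis as the non-generic (and manifestly unsafe) event that $\vect{p}$ coincides with the center of a learned collision component.
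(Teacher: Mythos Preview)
Your proposal is correct and follows essentially the same approach as the paper's proof: recognize $\safespace_{\collspace}\prl{\vect{p}}$ as a finite intersection of closed half-spaces (hence a convex polytope), and then verify by direct substitution that $\vect{p}$ satisfies every defining inequality, strictly when $\epsilon>0$. The paper's argument is just the two-sentence version of what you have written out in full; your added care about the $\min/\max$ clipping being constant in $\vect{x}$ and the degenerate case $\vect{p}=\mean_{\collspace_k}$ are reasonable elaborations the paper leaves implicit.
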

\medskip
\begin{proof}
By definition (\ref{eq.ProbabilisticallySafeCorridor}), the probabilistically safe corridor $\safespace_{\collspace}\prl{\vect{p}}$ is constructed as an intersection of half-spaces and so is a convex polytope. 
Moreover, for any $\epsilon \geq 0$ (resp. $\epsilon > 0$), these half-spaces are guaranteed to  contain $\vect{p}$ (resp. strictly in their interiors). 
Thus, the result follows.
\end{proof}


\begin{proposition} \label{prop.RealSafeSpace}
For $\epsilon \leq 0$, the probabilistically safe corridor $\safespace_{\collspace}\prl{\vect{p}}$ of a probabilistically safe state $\vect{p} \in \R^{\dimspace} \setminus  \overline{\CR}_{\gmpdf\prl{\boldsymbol{\mean}_{\collspace}, \boldsymbol{\covmat}_{\collspace}, \boldsymbol{\weight}_{\collspace}}}\prl{\boldsymbol{\CL}_{\collspace}}$ contains $\vect{p}$ in its interior $\mathring{\safespace}_{\collspace}\prl{\vect{p}}$  and is  also  probabilistically safe, i.e.,%
\begin{align}
&\vect{p} \in \R^\dimspace\setminus \overline{\CR}_{\gmpdf\prl{\boldsymbol{\mean}_{\collspace}, \boldsymbol{\covmat}_{\collspace}, \boldsymbol{\weight}_{\collspace}}}\prl{\boldsymbol{\CL}_{\collspace}} \nonumber \\
&\hspace{8mm}\Longrightarrow
\vect{p} \in \mathring{\safespace}_{\collspace}\prl{\vect{p}} \subset \R^{\dimspace} \setminus \overline{\CR}_{\gmpdf\prl{\boldsymbol{\mean}_{\collspace}, \boldsymbol{\covmat}_{\collspace}, \boldsymbol{\weight}_{\collspace}}}\prl{\boldsymbol{\CL}_{\collspace}}. \!\!\!
\end{align}
\end{proposition}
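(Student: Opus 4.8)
The plan is to peel the statement apart one Gaussian component at a time and reduce each to the elementary geometry of a Euclidean ball and one of its tangent half-spaces, using the affine whitening map $\vect{y}_k := \covmat_{\collspace_k}^{-\frac{1}{2}}\prl{\vect{x} - \mean_{\collspace_k}}$ and $\vect{q}_k := \covmat_{\collspace_k}^{-\frac{1}{2}}\prl{\vect{p} - \mean_{\collspace_k}}$. First I would set $r_k := \sqrt{\cdf_{\chi^2_{\dimspace}}^{-1}\prl{\CL_{\collspace_k}}}$ and $\rho_k := \norm{\vect{q}_k} = \norm{\covmat_{\collspace_k}^{-\frac{1}{2}}\prl{\mean_{\collspace_k} - \vect{p}}}$, and note that the hypothesis $\vect{p} \notin \overline{\CR}_{\gmpdf\prl{\boldsymbol{\mean}_{\collspace}, \boldsymbol{\covmat}_{\collspace}, \boldsymbol{\weight}_{\collspace}}}\prl{\boldsymbol{\CL}_{\collspace}}$, together with $\overline{\CR}_{\gmpdf} = \bigcup_k \CR_{\gpdf\prl{\mean_{\collspace_k}, \covmat_{\collspace_k}}}\prl{\CL_{\collspace_k}}$ and the ellipsoidal form (\ref{eq.gCR}), forces $\rho_k > r_k \geq 0$, hence $0 \leq r_k/\rho_k < 1$, for \emph{every} $k$; in particular $\vect{p} \neq \mean_{\collspace_k}$ so each hyperplane normal $\covmat_{\collspace_k}^{-1}\prl{\mean_{\collspace_k} - \vect{p}}$ is nonzero. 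Since $\epsilon \leq 0 < 1 - r_k/\rho_k$, the $\max$ in (\ref{eq.ProbabilisticallySafeCorridor}) collapses componentwise to $1 - r_k/\rho_k$, so the $k$-th defining constraint of $\safespace_{\collspace}\prl{\vect{p}}$ becomes the single linear inequality $\tr{\prl{\mean_{\collspace_k} - \vect{p}}}\covmat_{\collspace_k}^{-1}\prl{\vect{x} - \vect{p}}/\rho_k^2 \leq 1 - r_k/\rho_k$, and $\mathring{\safespace}_{\collspace}\prl{\vect{p}}$ is exactly the set where all these hold strictly.

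Next I would rewrite the $k$-th constraint in whitened coordinates. Using $\mean_{\collspace_k} - \vect{p} = -\covmat_{\collspace_k}^{\frac{1}{2}}\vect{q}_k$, $\vect{x} - \vect{p} = \covmat_{\collspace_k}^{\frac{1}{2}}\prl{\vect{y}_k - \vect{q}_k}$, and symmetry of $\covmat_{\collspace_k}^{\pm\frac{1}{2}}$, the numerator equals $\rho_k^2 - \tr{\vect{q}_k}\vect{y}_k$, so the constraint is equivalent to $\tr{\vect{q}_k}\vect{y}_k \geq r_k\rho_k$ and its strict form to $\tr{\vect{q}_k}\vect{y}_k > r_k\rho_k$. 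Evaluating at $\vect{x} = \vect{p}$ (so $\vect{y}_k = \vect{q}_k$) gives $\tr{\vect{q}_k}\vect{q}_k = \rho_k^2 > r_k\rho_k$ strictly for every $k$, which proves $\vect{p} \in \mathring{\safespace}_{\collspace}\prl{\vect{p}}$. For the inclusion, take any $\vect{x} \in \mathring{\safespace}_{\collspace}\prl{\vect{p}}$; then for every $k$, Cauchy--Schwarz gives $\rho_k\norm{\vect{y}_k} \geq \tr{\vect{q}_k}\vect{y}_k > r_k\rho_k$, hence $\norm{\vect{y}_k} > r_k$, i.e.\ $\tr{\prl{\vect{x} - \mean_{\collspace_k}}}\covmat_{\collspace_k}^{-1}\prl{\vect{x} - \mean_{\collspace_k}} > \cdf_{\chi^2_{\dimspace}}^{-1}\prl{\CL_{\collspace_k}}$, so by (\ref{eq.gCR}) $\vect{x} \notin \CR_{\gpdf\prl{\mean_{\collspace_k}, \covmat_{\collspace_k}}}\prl{\CL_{\collspace_k}}$; taking the union over $k$ yields $\vect{x} \notin \overline{\CR}_{\gmpdf\prl{\boldsymbol{\mean}_{\collspace}, \boldsymbol{\covmat}_{\collspace}, \boldsymbol{\weight}_{\collspace}}}\prl{\boldsymbol{\CL}_{\collspace}}$, as required.

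I expect the only real friction to be bookkeeping: verifying the identity $\tr{\prl{\mean_{\collspace_k} - \vect{p}}}\covmat_{\collspace_k}^{-1}\prl{\vect{x} - \vect{p}} = \rho_k^2 - \tr{\vect{q}_k}\vect{y}_k$ while keeping the two forms of (\ref{eq.ProbabilisticallySafeCorridor}) straight, and carefully arguing that the $\max$ simplifies componentwise under \emph{both} $\epsilon \leq 0$ and $\vect{p} \notin \overline{\CR}_{\gmpdf}$. It is also worth flagging a boundary subtlety that dictates the precise phrasing: the closed corridor $\safespace_{\collspace}\prl{\vect{p}}$ can in fact touch $\partial\overline{\CR}_{\gmpdf}$ at the tangency points $\prl{1 - r_k/\rho_k}\mean_{\collspace_k} + \prl{r_k/\rho_k}\vect{p}$ (e.g.\ when $\nummixture_{\collspace} = 1$ there is a single such point on the corridor boundary), so the probabilistic-safety inclusion genuinely needs the \emph{open} corridor $\mathring{\safespace}_{\collspace}\prl{\vect{p}}$ — i.e.\ the strict inequalities above — which is exactly what the displayed implication asserts.
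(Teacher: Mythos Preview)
Your proof is correct and follows the same route as the paper: collapse the $\max$ in (\ref{eq.ProbabilisticallySafeCorridor}) using $\rho_k > r_k$ together with $\epsilon \leq 0$, and then use that each defining half-space is bounded by the tangent hyperplane of the $k$-th confidence ellipsoid separating $\vect{p}$ from that ellipsoid. The paper simply asserts this tangent-hyperplane separation; your whitening change of variables plus Cauchy--Schwarz is precisely a clean verification of that assertion, and your observation about the boundary tangency point (forcing the inclusion to be stated for the open $\mathring{\safespace}_{\collspace}\prl{\vect{p}}$) is a subtlety the paper leaves implicit.
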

\medskip
\begin{proof}
For any $\vect{p} \in \R^\dimspace\setminus \overline{\CR}_{\gmpdf\prl{\boldsymbol{\mean}_{\collspace}, \boldsymbol{\covmat}_{\collspace}, \boldsymbol{\weight}_{\collspace}}}\prl{\boldsymbol{\CL}_{\collspace}}$, we have from (\ref{eq.gCR}) that $\scalebox{0.9}{$\tfrac{\sqrt{\cdf_{\chi^2_{\dimspace}}^{-1}\prl{\CL_{\collspace_k}}}}{\!\big\|\covmat_{\collspace_k}^{-\frac{1}{2}}\!\prl{\vect{p}-\mean_{\collspace_k}}\big\|\!}$} < 1 \leq 1 \!-\! \epsilon$ for all $k$.
%
%
Hence, the result directly follows from (\ref{eq.ProbabilisticallySafeCorridor}) and the fact that for any safe configuration $\vect{p} \in \R^{\dimspace} \setminus \overline{\CR}_{\gmpdf\prl{\boldsymbol{\mean}_{\collspace}, \boldsymbol{\covmat}_{\collspace}, \boldsymbol{\weight}_{\collspace}}}\prl{\boldsymbol{\CL}_{\collspace}}$ the   probabilistically safe corridor $\safespace\prl{\vect{p}; \boldsymbol{\mean}_{\collspace}, \boldsymbol{\covmat}_{\collspace}, \boldsymbol{\CL}_{\collspace}}$ is bounded by tangent hyperplanes of confidence regions of individual Gaussians that strictly separates the point $\vect{p}$ from the Gaussian confidence ellipsoids. 
\end{proof}

 Note that the safe corridor $\safespace_{\collspace}\prl{\vect{p}}$ around a probabilistically unsafe configuration $\vect{p} \in \overline{\CR}_{\gmpdf\prl{\boldsymbol{\mean}_{\collspace}, \boldsymbol{\covmat}_{\collspace}, \boldsymbol{\weight}_{\collspace}}}\prl{\boldsymbol{\CL}_{\collspace}}$ can be empty for $\epsilon < 0$, especially for Gaussian mixture models with significant  overlap.
 Fortunately, many Gaussian mixture learning algorithms yield proper mixture models with minimal overlap.
Moreover, in order to resolve this issue, one can consider using a nonnegative $\epsilon$, which adaptively relaxes the safety constraints of $\safespace_{\collspace}\prl{\vect{p}}$ depending on the safety level of the configuration $\vect{p}$ and yields a nonempty relatively safe corridor $\safespace_{\collspace}\prl{\vect{p}}$. 
Thus, an optimal selection of $\epsilon$ is $\epsilon = 0$, which ensures nonempty safe corridors for all configurations (Proposition \ref{prop.NonemptySafeSpace}) and  exact probabilistically safe corridors for  probabilistically safe configurations (Proposition \ref{prop.RealSafeSpace}).



\subsection{Guided Steering via Safe Corridors}

\begin{figure}[t]
\centering
\vspace{2mm}
\begin{tabular}{@{}c@{\hspace{4mm}}c@{}}
\includegraphics[width=0.20\textwidth]{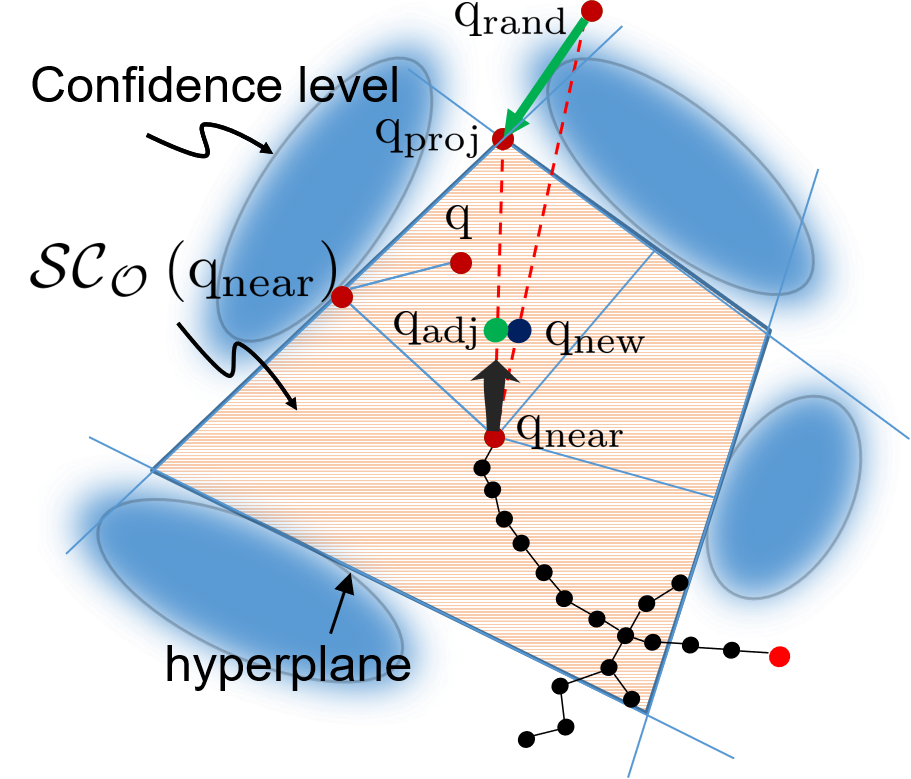} 
&
\includegraphics[width=0.23\textwidth]{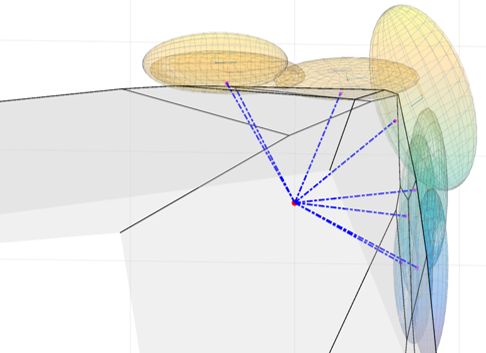}
\end{tabular}
\vspace{-1mm}
\caption{Local steering via probabilistically safe corridors. (left) Example tree extension using a probabilistically safe corridor in 2D space, (right) Probabilistically safe corridor in 3D space.}
\label{grd_fig_concept_guidance}
\vspace{-3mm}
\end{figure} 


We now describe a novel use of probabilistically safe corridors  for guided local steering of sampling-based planning, in particular, RRTs. In the original RRTs, a sample configuration $\vect{q}_{\mathrm{rand}}$ is randomly drawn in the configuration space, and then its nearest node $\vect{q}_{\mathrm{near}}$ in the tree is found based on a distance measure, which is set to be the standard Euclidean distance in this paper. Then, a new configuration $\vect{q}_{\mathrm{new}}$ is slightly extended from $\vect{q}_{\mathrm{near}}$ towards $\vect{q}_{\mathrm{rand}}$, say using the standard straight-line steering. 
If $\vect{q}_{\mathrm{new}}$ is collision-free, it is added to the tree as a new node, which is connected to the nearest node. If $\vect{q}_{\mathrm{new}}$ collides with an obstacle, then tree construction repeats with another $\vect{q}_{\mathrm{rand}}$.  

In this paper, we propose a new approach for tree expansion where  $\vect{q}_{\mathrm{new}}$ is adjusted to head towards collision-free space using probabilistically safe corridors $\safespace_{\collspace}$, as shown in Fig. \ref{grd_fig_concept_guidance}, by projecting $\vect{q}_{\mathrm{rand}}$ onto  $\safespace_{\collspace}\prl{\vect{q}_{\mathrm{near}}}$ as follows:
\begin{align}\label{eq.SafeGoal}
\vect{q}_{\mathrm{proj}} = \mtrcproj_{\safespace_{\collspace}\prl{\vect{q}_{\mathrm{near}}}}\prl{\vect{q}_{\mathrm{rand}}}
\end{align}
where $\mtrcproj_{A}\prl{\vect{x}} := \arg\min_{\vect{a} \in A} \norm{\vect{x} - \vect{a}}$ is the metric projection of a point $\vect{x} \in \R^{\dimspace}$ onto a closed convex set $A \subseteq \R^{\dimspace}$; that is to say, $\mtrcproj_{A}\prl{\vect{x}}$ returns the closest point of set $A$ to the input point $\vect{x}$.
Hence, the tree is extended towards $\vect{q}_{\mathrm{proj}}$ instead of $\vect{q}_{\mathrm{rand}}$, as shown in Fig. \ref{grd_fig_concept_guidance}.

\begin{proposition}
If a sampling-based motion planning algorithm is probabilistically complete for the standard straight-line steering, then the straight-line steering towards the projected goal onto probabilistically safe corridors, as described in (\ref{eq.SafeGoal}),  preserves its probabilistic completeness for  $\epsilon > 0$.
\end{proposition}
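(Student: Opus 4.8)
The plan is to isolate a family of sampling events that already suffices to certify probabilistic completeness for the straight-line steering and to show that, on each such event, the proposed projected steering produces \emph{exactly} the same tree extension; the straight-line completeness proof then transfers verbatim.

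The only property of the projection I would use is that $\mtrcproj_{A}(\vect{x}) = \vect{x}$ whenever $\vect{x} \in A$. Consequently, the extension from $\vect{q}_{\mathrm{near}}$ toward $\vect{q}_{\mathrm{rand}}$ through $\vect{q}_{\mathrm{proj}} = \mtrcproj_{\safespace_{\collspace}(\vect{q}_{\mathrm{near}})}(\vect{q}_{\mathrm{rand}})$ coincides with the ordinary straight-line extension toward $\vect{q}_{\mathrm{rand}}$ as soon as $\vect{q}_{\mathrm{rand}} \in \safespace_{\collspace}(\vect{q}_{\mathrm{near}})$. Next I would quantify how large a Euclidean ball around $\vect{q}_{\mathrm{near}}$ lies inside its corridor: each defining half-space of $\safespace_{\collspace}(\vect{p})$ in (\ref{eq.ProbabilisticallySafeCorridor}) has the form $\crl{\vect{x} \,:\, \ell_{\collspace_k}(\vect{x}) \leq c_k}$ with an affine $\ell_{\collspace_k}$ satisfying $\ell_{\collspace_k}(\vect{p}) = 0$ and threshold $c_k \geq \epsilon$, so for $\epsilon > 0$ the Euclidean distance from $\vect{p}$ to the $k$-th bounding hyperplane is at least $\epsilon / \norm{\nabla \ell_{\collspace_k}}$, which is strictly positive whenever $\vect{p} \neq \mean_{\collspace_k}$. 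Hence $\rho(\vect{p}) := \min_k \epsilon/\norm{\nabla\ell_{\collspace_k}}$ is a positive function of $\vect{p}$, continuous away from the finitely many means, with $\ball(\vect{p},\rho(\vect{p})) \subseteq \safespace_{\collspace}(\vect{p})$. (This is where $\epsilon>0$ enters, via Proposition \ref{prop.NonemptySafeSpace}: for $\epsilon = 0$ the corridor need not contain $\vect{p}$ in its interior unless $\vect{p}$ is probabilistically safe.)

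Then I would run the standard straight-line RRT completeness argument \cite{lavalle2001randomized} against a collision-free solution path $\gamma$ of positive clearance (we may assume, perturbing $\gamma$ within its clearance tube if necessary, that $\gamma$ avoids the finitely many collision-Gaussian means $\crl{\mean_{\collspace_k}}$). Fixing a tube around $\gamma$ thin enough to lie within the clearance and to exclude every $\mean_{\collspace_k}$, compactness of the tube and continuity of $\rho$ yield a uniform $\rho_\star > 0$ with $\ball(\vect{p},\rho_\star) \subseteq \safespace_{\collspace}(\vect{p})$ for all $\vect{p}$ in the tube. The completeness argument covers $\gamma$ by a finite chain of overlapping balls of some radius $\delta$ and shows, via a Borel--Cantelli estimate, that the sequence of events ``$\vect{q}_{\mathrm{rand}}$ falls in the next ball while the tree already has a node in the current one'' occurs almost surely as the iteration count grows, with each such event forcing a collision-free extension that advances the tree along $\gamma$. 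It then remains only to take $\delta$ no larger than both $\rho_\star$ (after absorbing the fixed constant that relates ball spacing to node--sample separation) and whatever value the straight-line proof already demands: on every such event $\vect{q}_{\mathrm{near}}$ and $\vect{q}_{\mathrm{rand}}$ then lie within Euclidean distance $\rho_\star$, so $\vect{q}_{\mathrm{rand}} \in \safespace_{\collspace}(\vect{q}_{\mathrm{near}})$, $\vect{q}_{\mathrm{proj}} = \vect{q}_{\mathrm{rand}}$, and the extension is literally the straight-line one. Shrinking $\delta$ only lengthens the (still finite) chain and keeps each event probability strictly positive, so the same almost-sure reachability conclusion holds and probabilistic completeness is preserved.

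I expect the delicate points to be just two. First, the uniformity of $\rho_\star$: this is exactly why the clearance tube must be thin enough to exclude the collision-Gaussian means, after which compactness and continuity of $\vect{p}\mapsto\rho(\vect{p})$ finish it. Second, one must argue that the additional nodes generated on the complementary ``$\vect{q}_{\mathrm{proj}} \neq \vect{q}_{\mathrm{rand}}$'' events cannot invalidate the progress argument; this is immediate because RRT trees are monotone (nodes are never deleted) and the completeness proof only requires a positive lower bound on the probability of the good event chain, not an exact description of the tree. Everything else is routine bookkeeping identical to the straight-line case.
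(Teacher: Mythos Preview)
Your proposal is correct and follows the same core idea as the paper's own proof: for $\epsilon>0$, Proposition~\ref{prop.NonemptySafeSpace} guarantees that $\vect{p}$ lies in the interior of $\safespace_{\collspace}(\vect{p})$, so the metric projection acts as the identity on a neighborhood of $\vect{p}$, and hence the projected steering is locally indistinguishable from straight-line steering---which is all one needs to inherit probabilistic completeness. The paper states exactly this in two sentences and stops; you have supplied the quantitative content the paper omits (the explicit radius $\rho(\vect{p})$, the uniform lower bound $\rho_\star$ via compactness of a clearance tube avoiding the means $\mean_{\collspace_k}$, the coupling with the ball-chain argument, and the monotonicity remark handling extraneous nodes), but the strategy is the same.
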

\begin{proof}
The result simply follows from Proposition~\ref{prop.NonemptySafeSpace} because the probabilistically safe corridor $\safespace_{\collspace}\prl{\vect{p}}$ of a configuration $\vect{p}\in \R^{\dimspace}$ strictly contains $\vect{p}$ in its interior  for $\epsilon > 0$ and the metric projection onto a probabilistically safe corridor locally behaves as the identity map. In other words, for $\epsilon > 0$, the straight-line steering toward the projected goal onto probabilistically safe corridors is locally equivalent to the standard unconstrained straight-line steering.    
\end{proof}

 \begin{algorithm}[t]
 \caption{Tree Extension in Configuration Space}\label{grd_config_extension_code}
 \begin{algorithmic}[1]
 \Require : $\boldsymbol{\mean}_{\collspace}$, $\boldsymbol{\covmat}_{\collspace}$ 
\State $\mathcal{T}.init(\vect{q}_{\mathrm{init}})$;
\While{Distance($\vect{q}_{\mathrm{goal}}$, $\vect{q}_{\mathrm{new}}$) $>$ $d_{min}$}
\State $\vect{q}_{\mathrm{rand}} \gets$ GetRandomSampling(), $iter = 0$; 
 \While{$iter < max\_iter$}
\State $\vect{q}_{\mathrm{near}} \gets$ GetNearestNeighbor($\mathcal{T}, \vect{q}_{\mathrm{rand}}$);
\State $\vect{q}_{\mathrm{proj}} \gets$ SteeringGuide($\boldsymbol{\mean}_{\collspace}$, $\boldsymbol{\covmat}_{\collspace}, \vect{q}_{\mathrm{near}},\vect{q}_{\mathrm{rand}}$);
\State $\vect{q}_{\mathrm{adj}} \gets$ StraightLineSteering($\vect{q}_{\mathrm{near}}, \vect{q}_{\mathrm{proj}}$,$\,\delta$);
\If {StraightLine($\vect{q}_{\mathrm{near}}$, $\vect{q}_{\mathrm{adj}}$) is Collision-Free}
\State $\mathcal{T}.addTree(\vect{q}_{\mathrm{adj}}$), $iter = iter + 1$;
 \Else
 \State \textbf{break};
 \EndIf 
 \EndWhile
 \EndWhile
 \end{algorithmic}
 \end{algorithm}

One computational challenge of our guided steering approach is that it requires to recompute the metric projection of $\vect{q}_{\mathrm{rand}}$ onto $\safespace_{\collspace}\prl{\vect{q}_{\mathrm{near}}}$ for each new selection of $\vect{q}_{\mathrm{rand}}$ and so $\vect{q}_{\mathrm{near}}$. 
Metric projection onto a convex polytope can be solved using any state-of-the-art quadratic optimization solver. For efficiency, we apply the active-set method for quadratic optimization, which is an iterative solver that ensures a feasible solution  and a decrement on the objective function at each iteration. This enables us to inherit some useful information from prior computation and stop its computation after some desired number of iterations. 
In order to reduce to computational cost, we keep $\vect{q}_{\mathrm{rand}}$ the same until a maximum number of iteration $max\_iter$ is reached. 
This enables us to warm-start the active set method with the  active constraints of the previous computation. If active constraints at the optimal solution are given, then a quadratic optimization problem with inequality constraints can be converted into a quadratic problem with equality constraints, which requires significantly less computational time to solve the optimization problem. For example, previous active constraints could be still active for slightly changed $\vect{q}_{\mathrm{near}}$ if the sample goal $\vect{q}_{\mathrm{rand}}$ is kept the same. Therefore, to increase computational efficiency, we always check first if the quadratic optimization is feasible with previously active hyperplane constraints of probabilistically safe corridors. 

\subsubsection{Tree Extension in the Configuration Space}

Algorithm 1 presents the pseudocode for the proposed tree extension methods in the configuration space. Here,  the nearest node $\vect{q}_{\mathrm{near}}$ of a random goal $\vect{q}_{\mathrm{rand}}$ in tree $\mathcal{T}$ is extended  by a new node $\vect{q}_{\mathrm{adj}}$  towards the projected goal $\vect{q}_{\mathrm{proj}}$  through the probabilistically safe corridor $\safespace_{\collspace}$ of $\vect{q}_{\mathrm{near}}$. If the random goal $\vect{q}_{\mathrm{rand}}$ satisfies the safety corridor constraints, then the tree is directly extended to the random goal, just like the standard straight-line extension method. In our implementation, we set the maximum number of iterations,  $max\_iter$ (Line 4), for using the same random goal $\vect{q}_{\mathrm{rand}}$ to be 3, and we select the maximum stepsize of the straight-line planner, $\delta$ (Line 7), manually depending on the desired accuracy level of collision checks. 



\subsubsection{Tree Extension in the Task Space}

For task space planning, we also use probabilistically safe corridors for guiding the end-effector of a manipulator  as described in Algorithm 2. 
Using forward kinematics, we define $\vect{X}_{\mathrm{rand}}$ to be the end-effector position of the random goal $\vect{q}_{\mathrm{rand}}$ and $\vect{X}_{\mathrm{near}}$ to be the end-effector position of the nearest node $\vect{q}_{\mathrm{near}}$ of $\vect{q}_{\mathrm{rand}}$ in tree $\mathcal{T}$. 
Here, our objective is to steer the end-effector position $\vect{X}_{\mathrm{near}}$ towards $\vect{X}_{\mathrm{rand}}$ via the projection $\vect{X}_{\mathrm{proj}}$ of $\vect{X}_{\mathrm{rand}}$ onto the $\safespace_{\collspace}\prl{\vect{X}_{\mathrm{near}}}$ along  the safe corridor $\safespace_{\collspace}\prl{\vect{X}_{\mathrm{near}}}$ in 3D space, as shown in Fig. \ref{grd_fig_concept_guidance}.
Accordingly, we select a steering step that is proportional with the stepsize of the standard straight-line steering of the end-effector as 
%
\begin{equation}
\Delta \vect{X}_{\mathrm{adj}} = \frac{\vect{X}_{\mathrm{proj}}-\vect{X}_{\mathrm{near}}}{||\vect{X}_{\mathrm{proj}}-\vect{X}_{\mathrm{{near}}}||}\cdot|| \vect{X}_{\mathrm{new}}-\vect{X}_{\mathrm{near}}|| , 
\end{equation}
%
%
and determine  the corresponding configuration as:
\begin{equation}
\vect{q}_{\textrm{adj}} = \vect{q}_{\textrm{near}} + J^{\dagger}(\vect{q}_{\textrm{near}}) \Delta \vect{X}_{\textrm{adj}}, 
\end{equation}
where $J^{\dagger}$ is the pseudoinverse of manipulator Jacobian $J$, satisfying  $J^{\dagger}  = J^{T} (J J^{T})^{-1}$. 
In  Fig. \ref{grd_fig:guidance_example}, we illustrate the guided steering of a manipulator using probabilistically safe corridors in task space: The new configuration (magenta), suggested by the standard straight line planner,  collides with obstacles, whereas the adjusted configuration (green), consistent with probabilistically safe corridors, moves in the tangent direction of obstacles.

\begin{algorithm}[t]
\caption{Tree Extension in Task Space}\label{grd_task_extension_code}
\begin{algorithmic}[1]
\Require : $\boldsymbol{\mean}_{\collspace}$, $\boldsymbol{\covmat}_{\collspace}$ 
\State $\mathcal{T}.init(\boldsymbol{e}_{\mathrm{init}}, \vect{q}_{\mathrm{init}})$;
\While{Distance($\vect{q}_{\mathrm{goal}}$, $\vect{q}_{\mathrm{new}}$) $>$ $d_{min}$}
\State $\vect{q}_{\mathrm{rand}} \gets$ GetRandomSampling();
\State $\vect{q}_{\mathrm{near}} \gets$ GetNearestNeighbor($\mathcal{T}, \vect{q}_{\mathrm{rand}}$);
\State $\vect{q}_{\mathrm{new}} \gets$ StraightLineSteering($\vect{q}_{\mathrm{near}}, \vect{q}_{\mathrm{rand}}$, $\delta$);
\State $\vect{X}_{\mathrm{rand}},\vect{X}_{\mathrm{near}}, \vect{X}_{\mathrm{new}} \gets$FwdKin($\vect{q}_{\mathrm{rand}},\vect{q}_{\mathrm{near}}, \vect{q}_{\mathrm{new}}$);
\State $\vect{X}_{\mathrm{proj}} \gets$SteeringGuide($\boldsymbol{\mean}_{\collspace},\boldsymbol{\covmat}_{\collspace},\vect{X}_{\mathrm{near}},\vect{X}_{\mathrm{rand}}$);
\State $\Delta \vect{X}_{\mathrm{\mathrm{adj}}} \gets$ $\frac{\vect{X}_{\mathrm{proj}}-\vect{X}_{\mathrm{near}}}{||\vect{X}_{\mathrm{proj}}-\vect{X}_{\mathrm{near}}||}\cdot|| \vect{X}_{\mathrm{new}}-\vect{X}_{\mathrm{near}}||$;
\State $\vect{q}_{\mathrm{adj}} \gets$ $\vect{q}_{\mathrm{near}} + J^{\dagger}(\vect{q}_{\mathrm{near}}) \Delta \vect{X}_{\mathrm{adj}}$ ;
\If {StraightLine($\vect{q}_{\mathrm{near}}$,$\vect{q}_{\mathrm{adj}}$) is Collision-Free}
\State $\mathcal{T}.addTree(q_{\mathrm{adj}}$);
\EndIf 
\EndWhile
\end{algorithmic}
\end{algorithm}

\begin{figure}[t]
\vspace{2mm}
\centering
\includegraphics[width=0.48\textwidth]{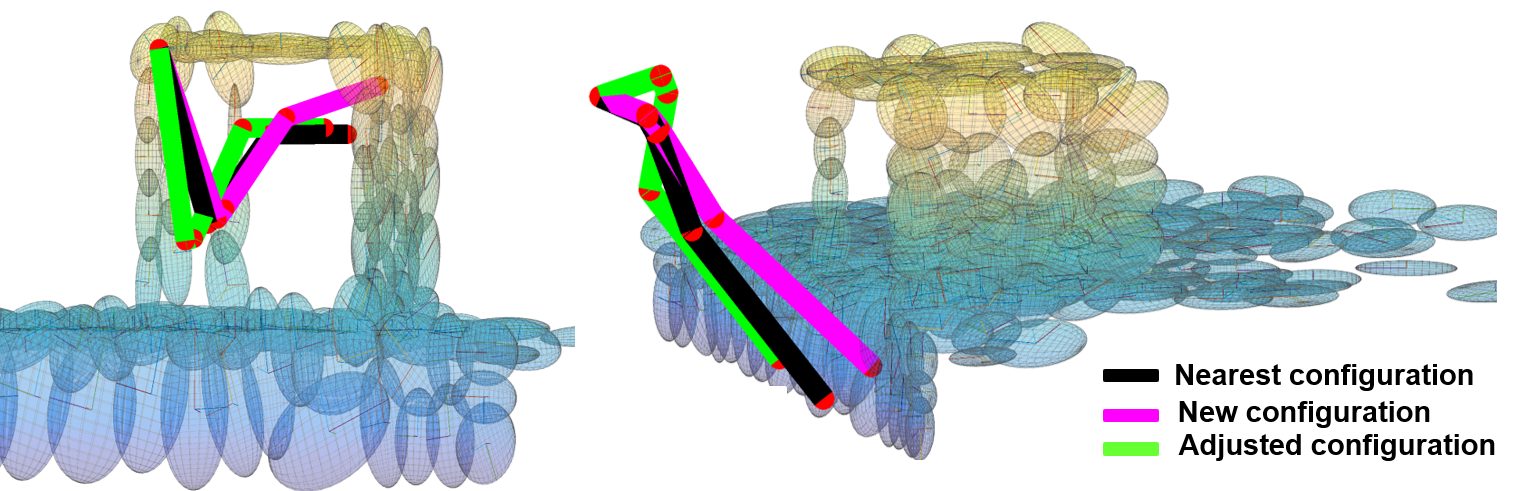}
\label{grd_fig:sim_condition}
\vspace{-3mm}
\caption{Examples of task-space steering of a robotic manipulator. Here, the new configuration (magenta), suggested by the straight line  planner from the nearest configuration (black), is adjusted to a better configuration (green) based on the associated probabilistically safe corridor. }\label{grd_fig:guidance_example}
\vspace{-3mm}
\end{figure}





\subsubsection{GMM-based Biased Sampling}
\label{grd:gmm_sampling}
In our experiments, we also compute the mixtures of Gaussian $\gmpdf\prl{\vect{x}, \boldsymbol{\mean}_{\freespace}, \boldsymbol{\covmat}_{\freespace}, \boldsymbol{\weight}_{\freespace}}$ for modeling the free space, which is used for biased sampling over the free space as described in \cite{huh2016learning}. 
For the settings where biased sampling is used, instead of uniform sampling in Line 3 in Algorithms 1 and 2, we randomly sample a configuration from the collision-free Gaussian mixture distribution $\gmpdf\prl{\vect{x}, \boldsymbol{\mean}_{\freespace}, \boldsymbol{\covmat}_{\freespace}, \boldsymbol{\weight}_{\freespace}}$. 
This sampling method increases the likelihood of a new sample being collision-free, and so can increase the computational efficiency of planning as discussed below.




\section{Results}

We evaluate SG-RRT in various environments using both a simulator and a real robot. We analyze the performance of SG-RRT by comparison with several existing RRT approaches. In addition, we demonstrate  SG-RRT on a real humanoid robot and provide results under real settings. All experiments are performed on a 2.7GHz PC, and all planners are implemented in Matlab.


\subsection{Learning Gaussian Mixture Models}

In all our experiments, we learn Gaussian mixture models offline by using the samples generated during the standard RRT planning (which was rich enough for accurate modeling, see Fig. \ref{grd_2d_link_result:rrt_cs}) and by manually selecting the kernel bandwidth for the Meanshift clustering so that the desired level of representation resolution is guaranteed.
In particular, we select the Gaussian kernel sizes for the Meanshift clustering as 10 degrees for 2DoF manipulator planning, 20 degrees for 7DoF manipulator planning, and $5$ cm for task space planning.
GMM learning takes 1.61 seconds for 191 clusters from 10,000 collision samples for 2DoF manipulator, 58.97 seconds for 1,096 clusters from 19,456 collision samples for 7DoF manipulator, and 3.64 seconds for 189 clusters from a 3D point cloud (including 18,413 data points) for task space planning.
For probabilistically safe corridors, we set the desired confidence level  $\kappa= 0.9$ and the safety tolerance $\epsilon=0.01$ for all cases. 
In future work, we plan to consider online GMM learning for adaptive motion planning in dynamic environments.

\subsection{2DoF Planar Manipulator}

\begin{figure}[t]
\vspace{2mm}
\centering
\subfigure[Workspace]
{\includegraphics[width=0.17\textwidth]{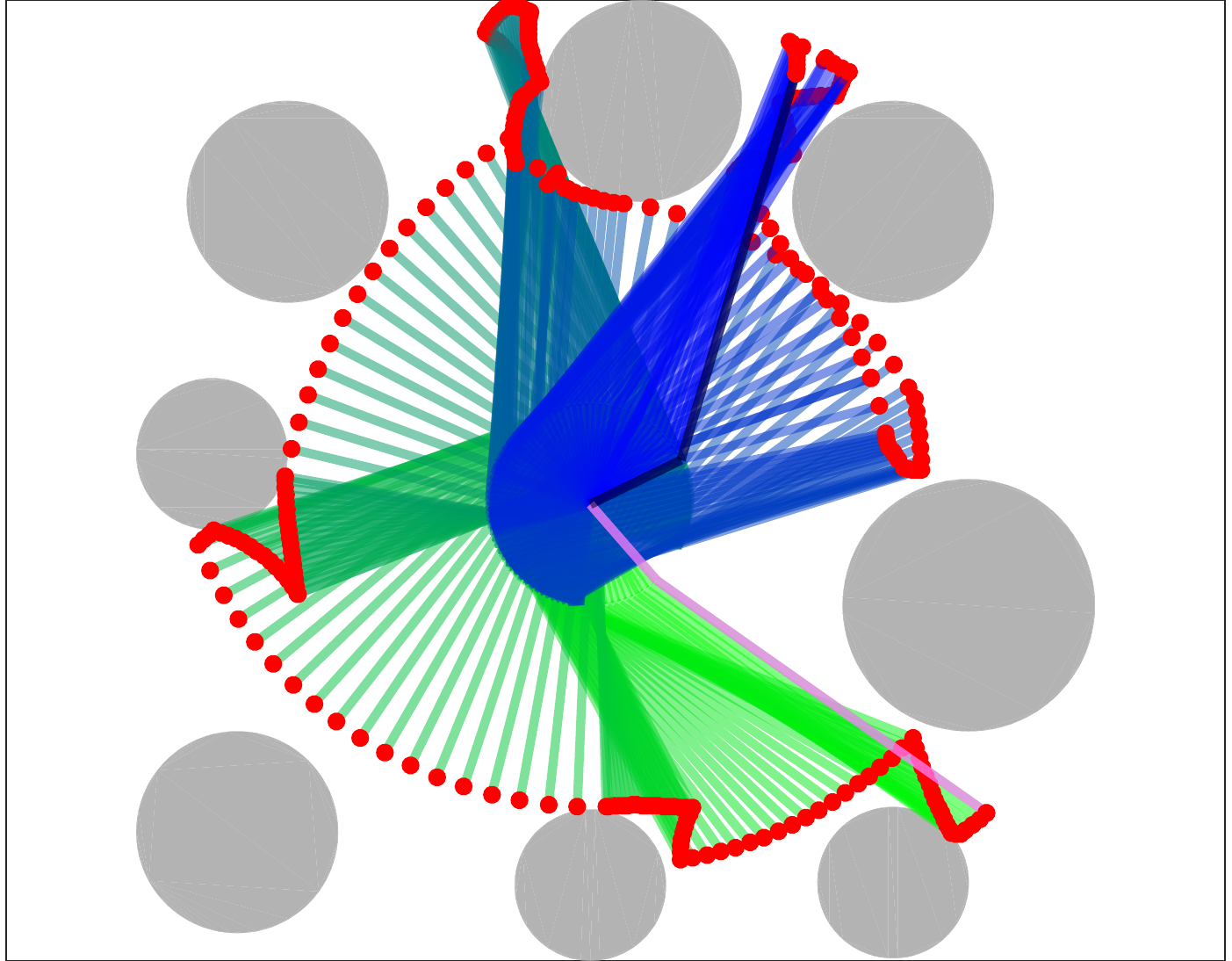}
\label{grd_2d_link_result:ws}
}
\subfigure[RRT]
{\includegraphics[width=0.135\textwidth]{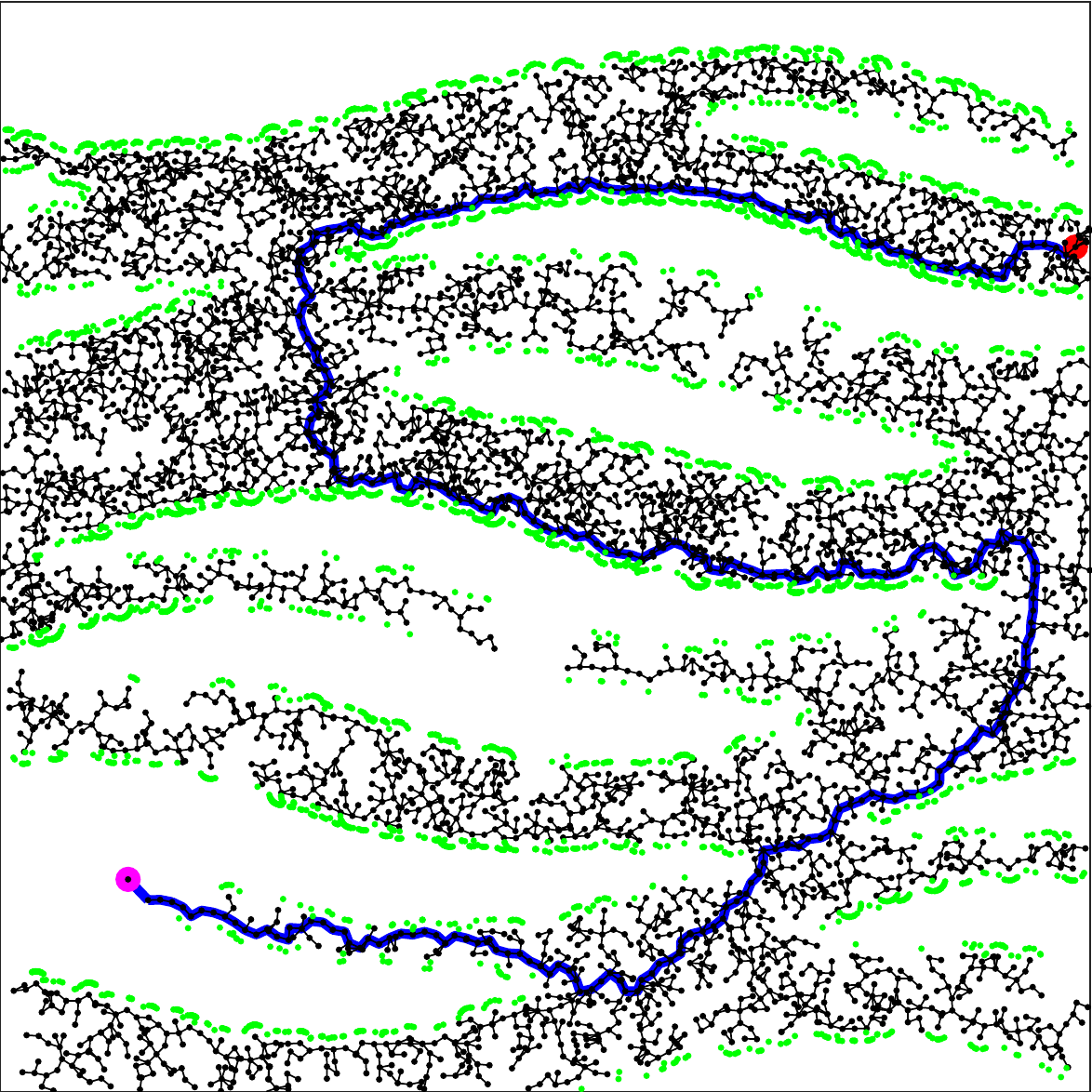}
\label{grd_2d_link_result:rrt_cs}
}
\subfigure[SG-RRT]
{\includegraphics[width=0.135\textwidth]{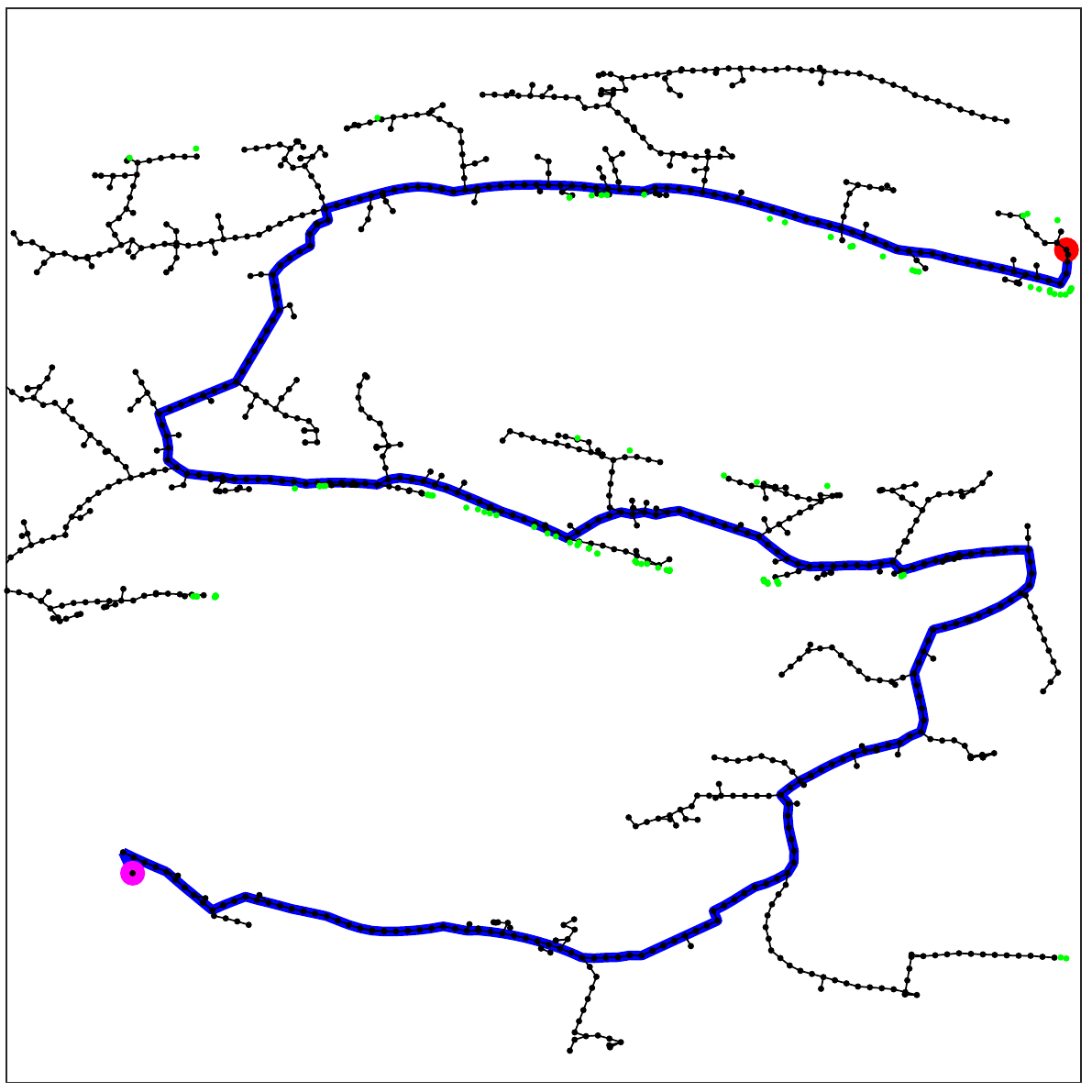}
\label{grd_2d_link_result:sgrrt_cs}
}
\subfigure[Execution Time]
{\includegraphics[width=0.47\textwidth]{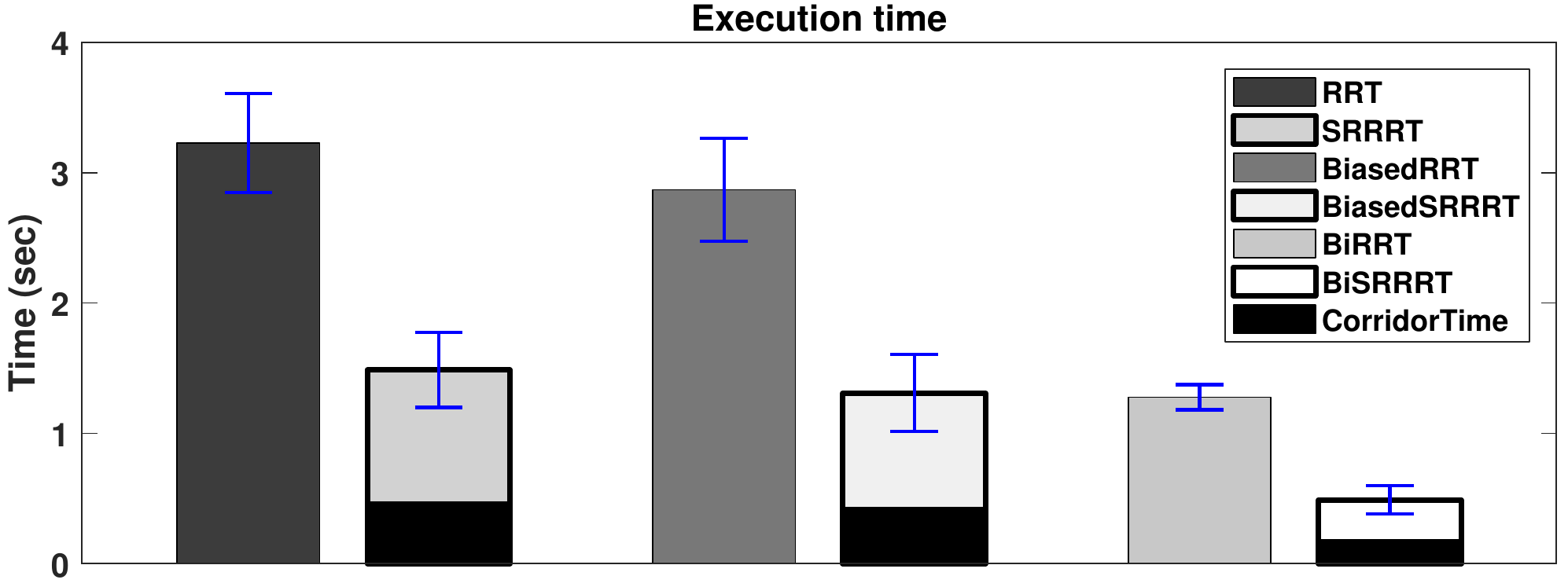}
\label{grd_2d_link_result:time}
}
\subfigure[Number of Collision Checks]
{\includegraphics[width=0.47\textwidth]{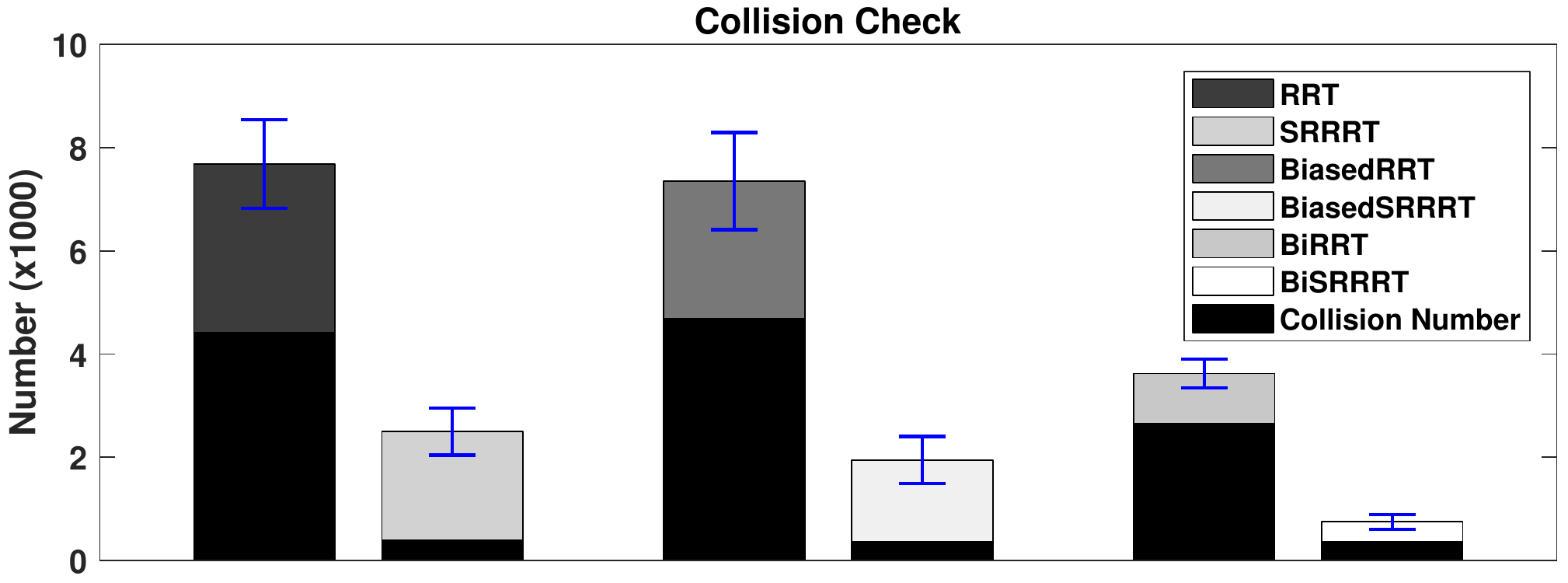}
\label{grd_2d_link_result:ccnum}
}
\vspace{-2mm}
\caption{RRT planning performance for a 2 DoF planar  manipulator
}
\label{grd_2d_link_result}
\vspace{-3mm}
\end{figure}  

\begin{figure}[b]
\vspace{-3mm}
\centering
\includegraphics[width=0.43\textwidth]{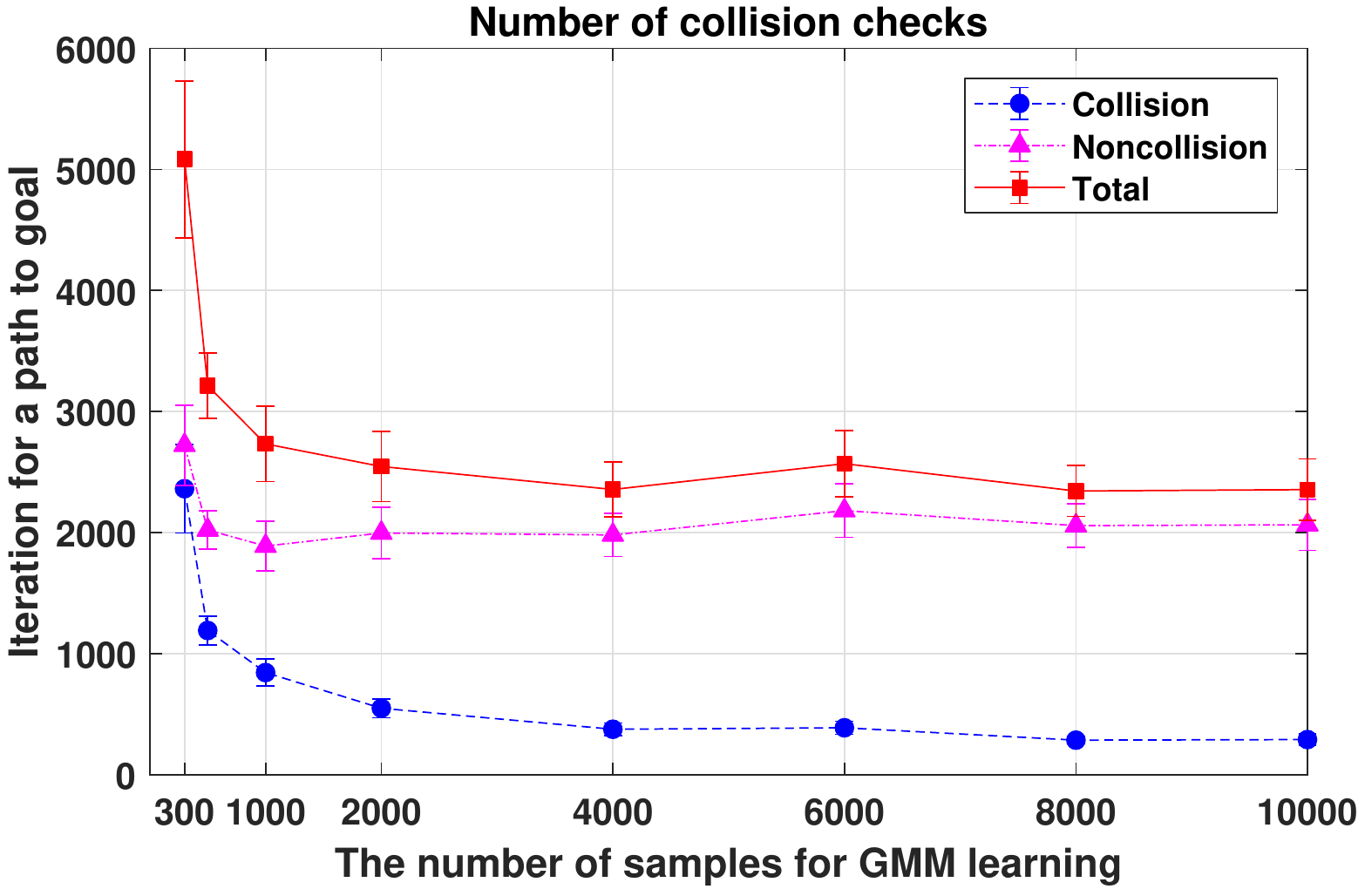} 
\vspace{-1mm}
\caption{Safety-guided RRT planning performance with respect to the number of collision samples used for GMM learning}
\label{fig:performance_sample_curve}
\end{figure}

\begin{figure}[h]
\vspace{2mm}
\centering
\begin{tabular}{cc}
  \includegraphics[width=0.20\textwidth]{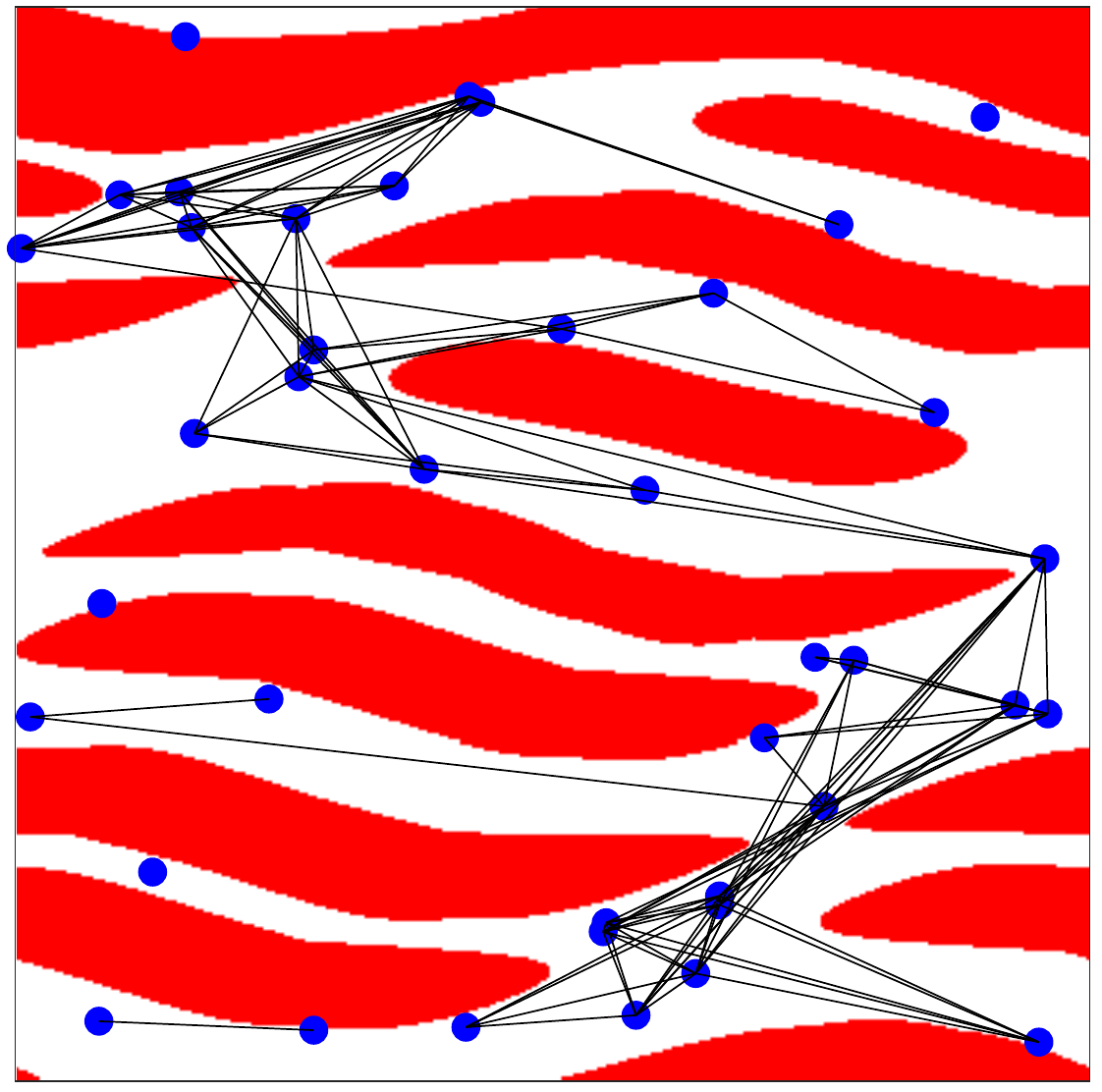}   &  \includegraphics[width=0.20\textwidth]{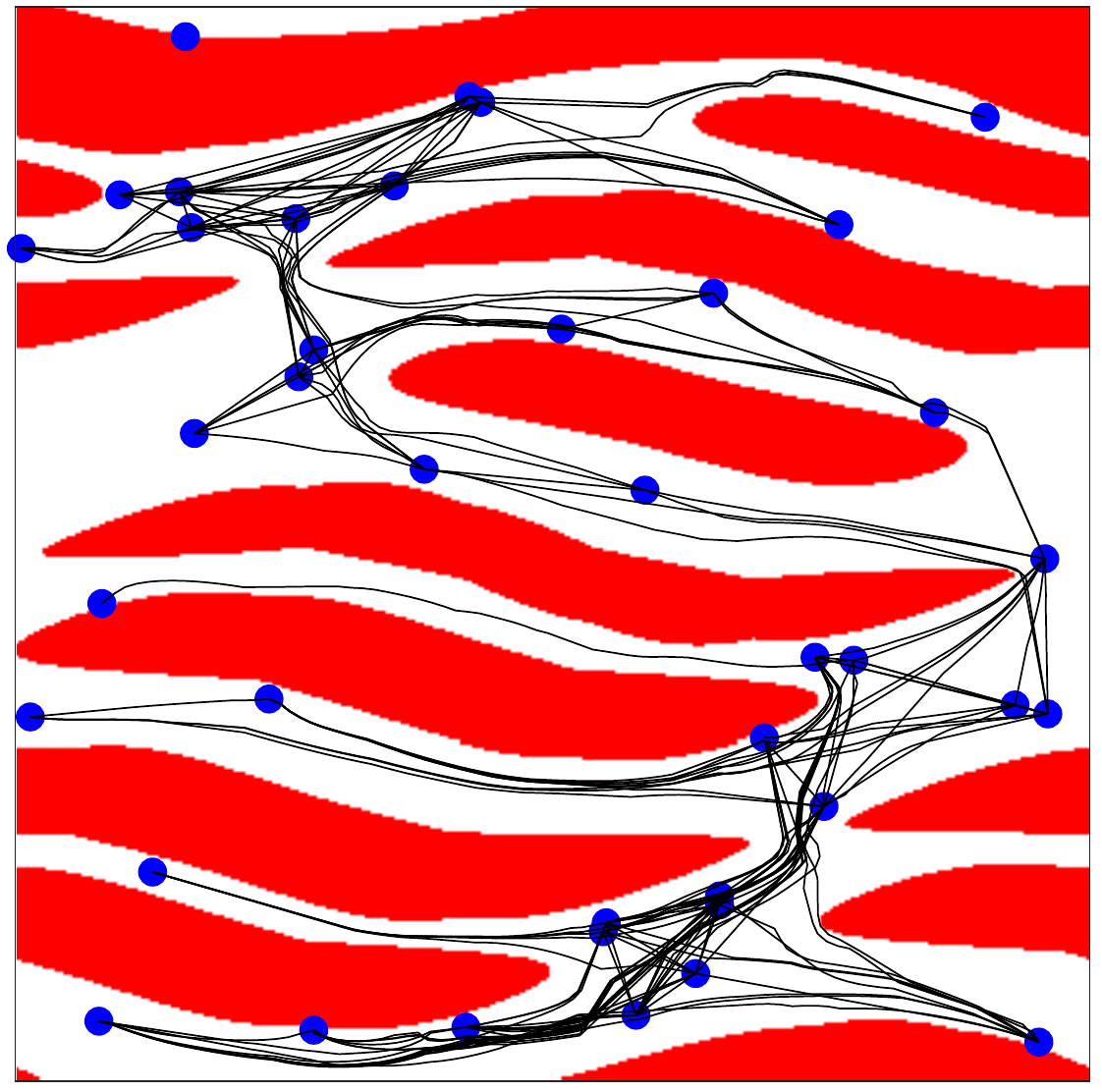}
\end{tabular}
\vspace{-2mm}
\caption{(left) PRM with the standard straight-line planner, (right) PRM with our safety guided local planner}
\label{fig:fig_prm_local_planner}
\vspace{-4mm}
\end{figure}

\begin{figure*}[!t]
\vspace{2mm}
\centering
\begin{tabular}{c}
\includegraphics[width=0.975\textwidth]{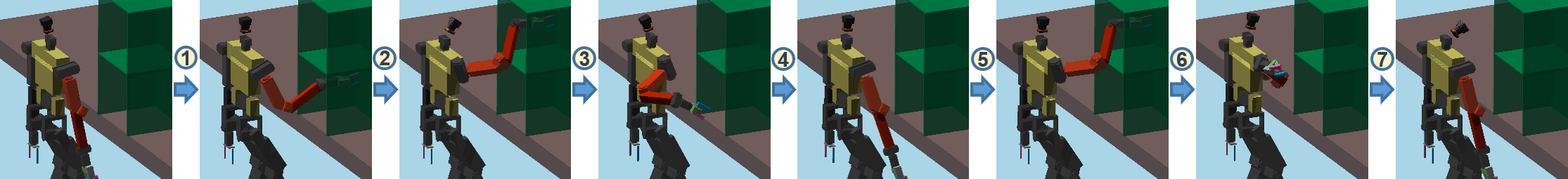}
\\[1mm]
\includegraphics[width=0.975\textwidth]{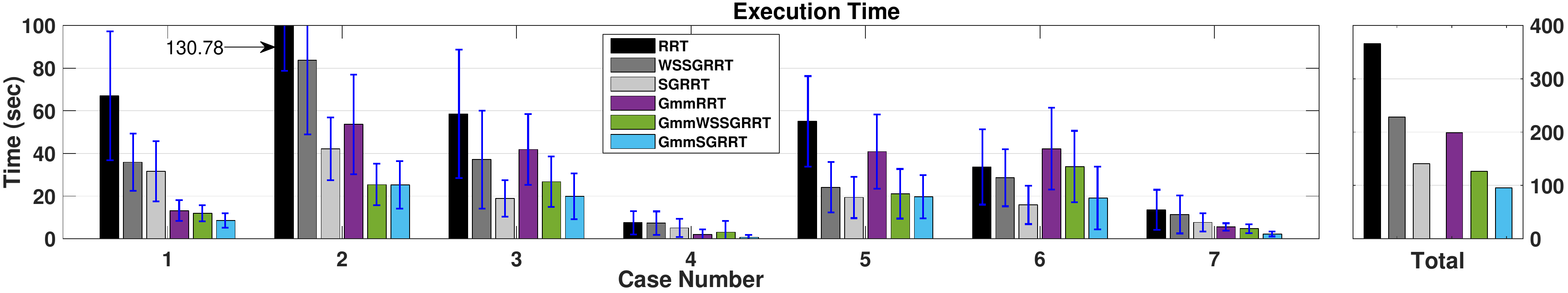} 
\\[0mm]
\includegraphics[width=0.975\textwidth]{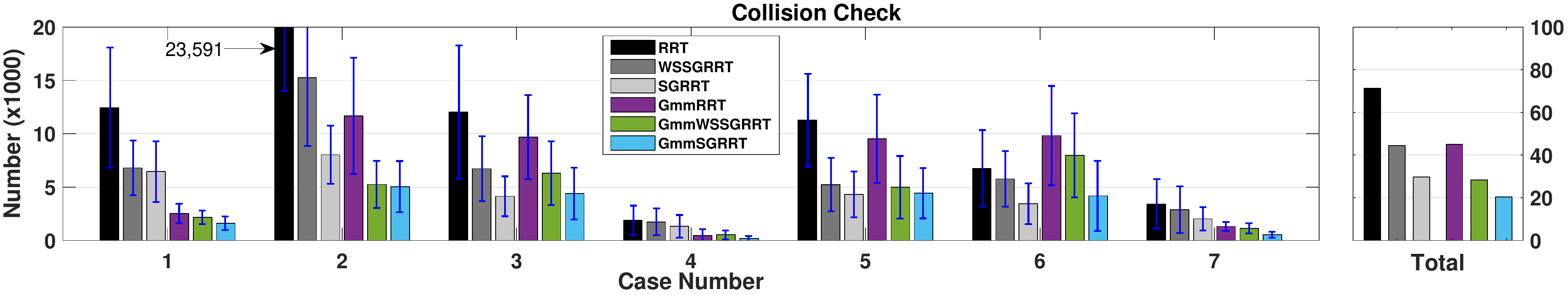}
\end{tabular}
\vspace{-2mm}
\caption{RRT planning performance for a 7DoF manipulator: (top)  Sequential planning tasks, (middle) Average execution time, (bottom) Average number of collision checks}
\label{grd_7d_link_result}
\vspace{-3mm}
 \end{figure*}  

For ease of visual presentation, we first consider motion planning of a 2DoF planar manipulator whose first link is 0.4 units long and second link is 1.6 units long as illustrated in Fig. \ref{grd_2d_link_result:ws}. 
In Fig. \ref{grd_2d_link_result}, we compare the computational performance of several variants of RRT planners (the standard RRT, the biased-RRT with $10\%$ goal bias, and the bidirectional RRT) with and without our proposed safety guided steering. 
Here, GMMs are learned offline along the collision space boundary (as shown in Fig. \ref{gmm_confidence_region}\,(d)) using  collision samples obtained during the standard RRT planning (green points in Fig. \ref{grd_2d_link_result:rrt_cs}) and they are used online for constructing probabilistically safe corridors. 
In our quantitative evaluation,  we consider the total execution time and the total number of collision checks as a performance measure, and we obtain the statistics (average and standard deviation) of these performance measures by running each planning algorithm for 50 times for 20 different start and goal pairs.
In overall, we observe that our safety guided steering increases computation performance significantly over the standard straight-line steering by dramatically reducing the required number of planning iterations (i.e., collision checks) to find a path between  any given start and goal pair, as shown in Fig. \ref{grd_2d_link_result:ccnum}. 
Because safety guided steering via probabilistically safe corridors minimizes collision risk by adaptively adjusting steering direction and stepsize.
As a result, our safety guided local planner yields steering action that are significantly less likely to be in collision; whereas the standard straight-line planner ends up being in collision with more than 50\% chance, as seen in Fig. \ref{grd_2d_link_result:ccnum}.
Finally, we find it useful to emphasize that the construction of and the projection onto a probabilistically safety corridor takes around 0.2 msec in average for each new sample (denoted by ``CorridorTime'' in Figure \ref{grd_2d_link_result}\,(d)), which is in the same order of magnitude as the computation cost of a collision check that takes around 0.3 msec.

{In Fig. \ref{fig:performance_sample_curve}, we demonstrate how the average number of RRT iterations (i.e., collision checks), required  for finding a path between any given start and goal pair, changes with the number of sample collision configurations (i.e., training data) used for Gaussian mixture learning. As expected, the performance of RRT planning with safety guided steering increases with the increasing size of training data as a result of increasing  accuracy of the Gaussian mixture model.


\begin{table}[b]
    \centering
    \vspace{-3mm}
    \caption{GMM and PRM Computation Times}
    \label{table:gmm_prm_construction}
    {\footnotesize
    \begin{tabular}{|@{\hspace{0.5mm}}c@{\hspace{0.5mm}}|@{\hspace{0.5mm}}c@{\hspace{0.5mm}}|@{\hspace{0.5mm}}c@{\hspace{0.5mm}}|@{\hspace{0.5mm}}c@{\hspace{0.5mm}}|@{\hspace{0.5mm}}c@{\hspace{0.5mm}}|@{\hspace{0.5mm}}c@{\hspace{0.5mm}}|@{\hspace{0.5mm}}c@{\hspace{0.5mm}}|@{\hspace{0.5mm}}c@{\hspace{0.5mm}}|}
    \hline
        \multicolumn{4}{|@{\hspace{0.5mm}}c@{\hspace{0.5mm}}|@{\hspace{0.5mm}}}{GMM Construction Time (sec)} & \multicolumn{4}{c|}{PRM Construction Time (sec)} \\  
        \hline
         Num.\,of  &Sampling  &GMM  &Total & Num.\,of  & PRM  & Collision &  Connected\\
         Samples  &Time  &Time & Time & Vertices & Time  &Checks & PRM\\
         \hline
         300               & 0.1665        & 0.0489       & 0.2154 & 100             & 2.4750          & 7,983             &  No \\
         500               & 0.2023        & 0.1220       & 0.3244 & 150             & 5.4377          & 18,361            &  No \\
         1,000             & 0.3855        & 0.2632       & 0.6487 & 200             & 10.3674          & 35,306            & No\\
         2,000             & 0.7640        & 0.4236       & 1.1876 & 250             & 16.0856          & 55,517            & No \\
         4,000             & 1.5159        & 0.8545       & 2.3704  &
         300             & 23.0590          & 81,274            & Yes \\
         6,000             & 2.2659        & 1.2205       & 3.4904 & 350             & 30.2114         & 107,841           & Yes \\
         8,000             & 2.8811        & 1.4230       & 4.3011 & 400             & 38.6380         & 134,851           & Yes \\
         10,000            & 3.6009        & 1.6100       & 5.2109 & 450             & 49.1138         & 171,122           & Yes \\ 
         \hline
    \end{tabular}
    }
\end{table}

 In Fig. \ref{fig:fig_prm_local_planner}, we present  an application of our safety guided steering to the probabilistic roadmap (PRM) planning of the 2DoF planar manipulator.
 As seen in Fig. \ref{fig:fig_prm_local_planner}, our safety guided steering noticeably increases the connectivity of a PRM as compared to the standard straight-line planner. 
 Here, two vertices of a PRM is said to be connected if safety guided steering can joining them in at most 100 steps.
 Finally, to briefly compare the computation cost of the learning phases of  the GMM and PRM methods, we provide in Table \ref{table:gmm_prm_construction}  the average computation time for the GMM and  PRM constructions for the 2DoF planar manipulator planning.
 As expected, for the same number of samples, GMM learning is around two orders of magnitude faster then the PRM construction because the connectivity test of PRMs is significantly computationally costly than the nearest neighbor search and the statistics computation of GMM.  
\subsection{7DoF Manipulator in 3D Space}

In order to validate the performance of SG-RRT quantitatively in high dimensional space, we compare it with traditional approaches with a 7DoF manipulator in 3D space using the Webots simulator of the Cyberbotics Ltd. company.
Fig. \ref{grd_7d_link_result}\,(top) shows the simulation scenario that is composed of seven sequential planning tasks. 
This scenario includes a difficult task, where the robot must remove its arm from the lower shelf and then insert it into the upper shelf. The simulation trials are repeated 50 times for accurate evaluation, and we use the average execution time and the number of collision checks as the evaluation criteria. 



 
For the comparison, we evaluate the standard RRT, safe-guided RRT (SG-RRT), and safe-guided RRT in the task space (WSSG-RRT). In addition, since we can apply GMM-based sampling as described in Section \ref{grd:gmm_sampling}, we also evaluate GMM-based RRT (Gmm-RRT), GMM-based safe-guided RRT (GmmSG-RRT), and GMM-based safe-guided RRT in the task space (GmmWSSG-RRT). Note that we apply a bidirectional method (RRT-Connect) \cite{kuffner2000rrt} in all approaches. The Gmm-RRT can be faster than the standard RRT, and the GmmSG-RRT is the fastest among all approaches. The WSSG-RRT and the GmmWSSG-RRT are faster than the RRT and Gmm-RRT. This demonstrates that the end-effector of the manipulator is effectively guided by the safe corridor in the high dimensional space, and it can reduce the computational time and the number of collision checks compared to traditional approaches.
We also observe in Fig. \ref{grd_7d_link_result} that SGRRT planning  is faster and requires less collision checks in configuration spaces than in task spaces, because probabilistically safe corridors are geometrically more informative when constructed in configuration spaces than in task spaces.   
Therefore, the tree extension with the safe corridor is significantly more efficient than the traditional methods.

\subsection{Physical Robot Experiments}

We demonstrate the performance of SG-RRT on a 7DoF manipulator (length: $85cm$) of an actual humanoid robot and an RGBD camera (ASUS Xtion Live Pro) with the scenario shown in Fig. \ref{grd_real_7d_link_result}\,(top). The robot is positioned $35cm$ from the shelf ($35cm \times 37 cm$) on the table. Figure \ref{grd_real_7d_link_result}  presents the comparison results of GmmSG-RRT and the standard RRT in terms of the execution time and the number of collision checks. Note that we apply a bidirectional method (RRT-Connect) and give 10$\%$ goal biased samples. Since the GmmSG-RRT adjusts a new node in the direction that avoids obstacles using probabilistically safe corridors and also utilizes biased sampling over collision-free space, the sample connectivity increases around narrow spaces, and tree expansion efficiently avoids obstacles. GmmSG-RRT is significantly efficient even when the robot needs to insert its arm onto the shelf. On the other hand, the computational time and the number of collision checks for the standard RRT planner dramatically increases in such complicated tasks.

\begin{figure}[h]
\vspace{2mm}
\centering
\begin{tabular}{c}
\includegraphics[width=0.46\textwidth]{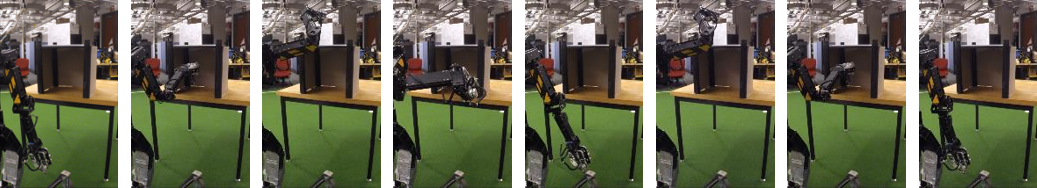}
\\
\includegraphics[width=0.46\textwidth]{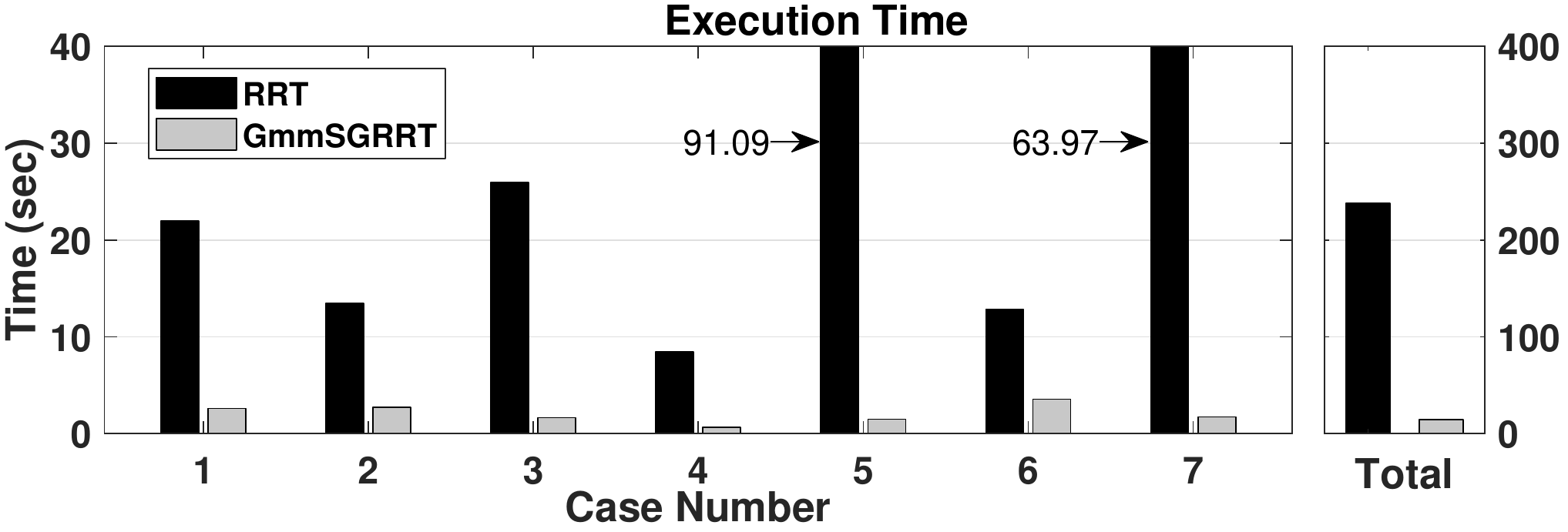}
\\
\includegraphics[width=0.46\textwidth]{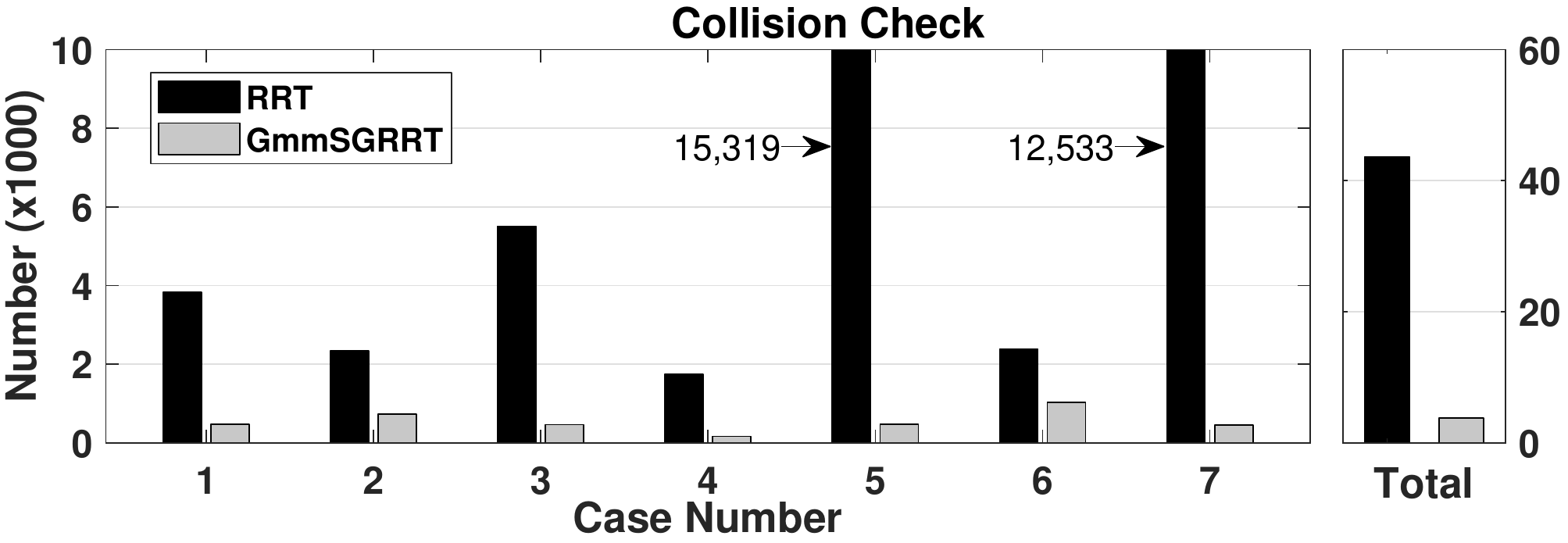}
\end{tabular}
\vspace{-1mm}
\caption{RRT planning performance with an actual physical robot: (top) Experiment with a physical robot, (middle) Average execution time, (bottom) Average number of collision checks
}
\label{grd_real_7d_link_result}
\vspace{-3mm}
\end{figure}  





\section{Discussion} 


In this paper, we present an effective local steering approach for sampling-based motion planning using probabilistically safe corridors of learned Gaussian mixture models of configuration spaces. 
We construct a probabilistically safe corridor around a configuration using tangent hyperplanes of confidence ellipsoids of Gaussian mixture models that are learned using collision history to approximate configuration space obstacles.
Accordingly, we propose a probabilistically safe local steering primitive that extends a random motion planning graph towards a sample goal using its projection onto the associated probabilistically safe corridor, which heuristically minimizes collision likelihood.
We observe that the proposed local steering approach improves the performance of sampling-based planning in challenging regions, especially narrow passages, by adjusting steering direction and stepsize.  
In our simulations and experiments with a real robot manipulator, we demonstrate that our proposed safety guided local  planner shows significant performance improvement over the standard straight-line planner for randomized motion  planning of 2DoF and 7DoF manipulators. In a future paper, we plan to extend our work using online GMM learning for uncertainty-aware adaptive planning.

{\small
\bibliographystyle{IEEEtran}
\bibliography{references.bib}
}

\end{document}